\documentclass{article}
\pdfoutput=1





\usepackage[nonatbib,final]{neurips_2022}

\usepackage[utf8]{inputenc} 
\usepackage[T1]{fontenc}    
\usepackage{url}            
\usepackage{booktabs}       
\usepackage{amsfonts}       
\usepackage{nicefrac}       
\usepackage{microtype}      
\usepackage{xcolor}         


\usepackage{mathtools}

\usepackage[english]{babel}


\usepackage{amsmath,amsthm,amstext,amsfonts,amssymb,mathrsfs, bbm}
\usepackage{color,algorithm,algorithmic} 
\usepackage{caption}
\usepackage{subcaption}
\usepackage{mwe}


\usepackage{xcolor}

\newcommand{\bluetext}[1]{{\leavevmode\color{blue}#1}}

\newcommand{\loss}{J}
\newcommand{\task}{i}


\usepackage{amsmath}
\usepackage{amssymb}
\usepackage{mathtools}
\usepackage{amsthm}

\theoremstyle{plain}
\newtheorem{theorem}{Theorem}[section]

\newtheorem{lemma}[theorem]{Lemma}

\theoremstyle{definition}

\newtheorem{assumption}[theorem]{Assumption}
\theoremstyle{remark}

\newcommand{\state}{\mathcal{S}}
\newcommand{\action}{\mathcal{A}}

\usepackage{hyperref,url,booktabs,nicefrac,microtype}
\usepackage{amsmath,amsthm,amstext,amsfonts,amssymb,mathrsfs, bbm}
\usepackage{graphicx,caption,subcaption,algorithm,algorithmic} 
\usepackage{wrapfig,gensymb}
\usepackage{color}

\title{Enhanced Meta Reinforcement Learning using Demonstrations in Sparse Reward Environments}

%

\author{
  Desik Rengarajan\thanks{Equal contribution.} \quad Sapana Chaudhary\footnotemark[1]  \\ \quad \textbf{Jaewon Kim} \quad \textbf{Dileep Kalathil} \quad \textbf{Srinivas Shakkottai} \\
  Department of Electrical and Computer Engineering, Texas A\&M University\\
  \texttt{\{desik,sapanac,jwkim8804,dileep.kalathil,sshakkot\}@tamu.edu}} 

\begin{document}

\maketitle

\begin{abstract}
Meta reinforcement learning (Meta-RL) is an approach wherein the experience gained from solving a variety of tasks is distilled into a meta-policy. The meta-policy, when adapted over only a small (or just a single) number of steps, is able to perform near-optimally on a new, related task.  However, a major challenge to adopting this approach to solve real-world problems is that they are often associated with sparse reward functions that only indicate whether a task is completed partially or fully. We consider the situation where some data, possibly generated by a sub-optimal agent, is available for each task. We then develop a class of algorithms entitled Enhanced Meta-RL {using} Demonstrations (EMRLD) that exploit this information---even if sub-optimal---to obtain guidance during training. We show how EMRLD jointly utilizes RL and supervised learning over the offline data to generate a meta-policy that demonstrates monotone performance improvements. We also develop a warm started variant called EMRLD-WS that is particularly efficient for sub-optimal demonstration data. Finally, we show that our EMRLD algorithms significantly outperform existing approaches in a variety of sparse reward environments, including that of a mobile robot. 
\end{abstract}

\section{Introduction}

Meta-Reinforcement Learning  (meta-RL) is an approach towards quickly solving similar tasks while only gathering a few samples for each task.  The fundamental idea behind approaches such as the popular Model Agnostic Meta-Learning (MAML)~\cite{finn2017model} is to determine a universal meta-policy by combining the information gained over working on several tasks.   This meta-policy is optimized in a manner such that, when adapted using a small number of samples gathered for a given task, it is able to perform near optimally on that specific task.  While the dependence of MAML on only a small number of samples for task-specific adaptation is very attractive, its also means that these samples must be highly representative of that task for meaningful adaptation and meta-learning.  However, in many real-world problems, rewards provided are sparse in that they might only provide limited feedback.   For instance, there might be a reward only if a robot gets close to designated way point, with no reward otherwise.   Hence, MAML is likely to fail on both counts of task-specific adaptation and optimization of the meta-policy over tasks in sparse reward environments.   Making progress towards learning a viable meta-policy in such settings is challenging without additional information.

Many real-world tasks are associated with empirically determined polices used in practice.  Such polices could be inexpert, but even limited demonstration data gathered from applying these policies could contain valuable information in the sparse reward context.  While the fact that the policy generating the data could be inexpert suggests that direct imitation might not be optimal, supervised learning over demonstration data could be used for \emph{enhancing} adaptation and learning.  How best should we use demonstration data  to enhance the process of meta-RL in the sparse reward setting?


Our goal is a principled design of a class of meta-RL algorithms that can exploit demonstrations from inexpert policies in the sparse reward setting.   Our general approach follows two-step algorithms like MAML that employ: (i) Task-specific Adaptation: execute the current meta-policy on a task and adapt it based on the samples gathered to obtain a task-specific policy, and (ii) Meta-policy Optimization: execute task-specific policies on an ensemble of tasks to which they are adapted, and use the samples gathered to optimize the meta-policy from whence they were adapted.  
Our key insight is that we can enhance RL-based policy adaptation with behavior cloning of the inexpert policy to guide task-specific adaptation in the right direction when rewards are sparse.  Furthermore, execution of such an enhanced adapted policy should yield an informative sample set that indicates how best to obtain the sparse rewards for meta-policy optimization. We aim at analytically and empirically capturing this progression of policy improvement starting from task-specific policy adaption to the ultimate meta-policy optimization.  Thus, as long the inexpert policy has an advantage, we must be able to exploit it for meta-policy optimization. 

Our main contributions are as follows.  (i) We derive a policy improvement guarantee result for MAML-like two-step meta-RL algorithms.  
We show that the inclusion of demonstration data can further increase the policy improvement bound as long as the inexpert policy that generated the data has an advantage. (ii) We propose an approach entitled Enhanced Meta-RL using Demonstrations (EMRLD) that combines RL-based policy improvement and behavior cloning from demonstrations for task-specific adaptation.  We further observe that directly applying the meta-policy to a new sparse-reward task sometimes does not yield informative samples, and a warm-start to the meta-policy using the demonstrations significantly improves the quality of the samples, resulting in a variant that we call EMRLD-WS. (iii) We show on standard MuJoCo and two-wheeled robot environments that our algorithms work exceptionally well, even when only provided with just one trajectory of sub-optimal demonstration data per task. Additionally, the algorithms work well even when exposed only to a small number of tasks for meta-policy optimization. (iv) Our approach is amenable to a variety of meta-RL problems wherein tasks can be distinguished across rewards (e.g., whether forward or backward motion yields a reward for a task) or across environment dynamics (e.g., the amount of environmental drift that a wheeled robot experiences changes across tasks).  To illustrate the versatility of EMRLD, we not only show simulations on different continuous control multi-task environments, but also demonstrate its excellent performance via real-world experiments on a TurtleBot robot~\cite{amsters2019turtlebot}.  We provide videos of the robot experiments and code at \texttt{https://github.com/DesikRengarajan/EMRLD}.

\textbf{Related Work:}  Here, we provide a brief overview of the related works. We leave a more thorough discussion on related works to the Appendix. 

\textbf{Meta learning:} Basic ideas on the meta-learning framework are discussed in \cite{hochreiter2001learning,  thrun2012learning, duan2016rl}. Model-agnostic meta-learning (MAML)~\cite{finn2017model} introduced the two-step approach described above, and can be used in the supervised learning and RL contexts.  However, in its native form, the RL variant of MAML can suffer from issues of inefficient gradient estimation, exploration, and dependence on a rich reward function.  Among others, algorithms like ProMP \cite{rothfuss2018promp} and DiCE \cite{foerster2018dice} address the issue of inefficient gradient estimation. Similarly, E-MAML \cite{al2017continuous, stadie2018some} and MAESN \cite{gupta2018meta} deal with the issue of exploration in meta-RL. PEARL~\cite{rakelly2019efficient} takes a different approach to meta-RL, wherein task specific contexts are learnt during training, and interpreted from trajectories during testing to solve the task.  HTR~\cite{packer2021hindsight} relabels the experience replay data of any off-policy algorithm such as PEARL~\cite{rakelly2019efficient} to overcome exploration difficulties in sparse reward goal reaching environments.  Different from this approach, we use demonstration data to aid learning and are not restricted to goal reaching tasks.

\textbf{RL with demonstration:} Leveraging demonstrations is an attractive approach to aid learning~\cite{hester2018deep, vecerik2017leveraging, nair2018overcoming}.  Earlier work has incorporated data from both expert and inexpert policies to assist with policy learning in sparse reward environments \cite{nair2018overcoming, hester2017learning,vecerik2017leveraging, kang2018policy, rengarajan2022reinforcement}.  In particular,  \cite{hester2017learning} utilizes demonstration data by adding it to the replay buffer for Q-learning, \cite{rajeswaran2017learning} proposes an online fine-tuning algorithm by combining policy gradient and behavior cloning, while \cite{rengarajan2022reinforcement} proposes a two-step guidance approach where demonstration data is used to guide the policy in the initial phase of learning. 

\textbf{Meta-RL with demonstration:} Meta Imitation Learning~\cite{finn2017one} extends MAML~\cite{finn2017model} to imitation learning from expert video demonstrations.  WTL~\cite{zhou2019watch} uses demonstrations to generate an exploration algorithm, and uses the exploration data along with demonstration data to solve the task.  GMPS~\cite{mendonca2019guided} extends MAML~\cite{finn2017model} to leverage expert demonstration data by performing meta-policy optimization via supervised learning.  Closest to our approach are GMPS~\cite{mendonca2019guided} and Meta Imitation Learning~\cite{finn2017one}, and we will focus on comparisons with versions of these algorithms, along with the original MAML~\cite{finn2017model}.


Our work differs from prior work on meta-RL with demonstration in several ways.  First, existing works assume the availability of data generated by an expert policy, which severely limits their ability to improve beyond the quality of the policy that generated the data.  This degrades their performance significantly when they are presented with sub-optimal data generated by an inexpert policy that might be used in practice.  Second, these works use demonstration data in a purely supervised learning manner, without exploiting the RL structure.  We use a  combination of loss functions associated with RL and supervised learning to aid the RL policy gradient, which enables our approach to utilize any reward information available.  This makes it superior to existing work in the sparse reward environment, which we illustrate in several simulation settings and real-world robot experiments.

\section{Preliminaries}
A Markov Decision Processes (MDP) is typically  represented as a tuple  $<\state,\action,  R, P, \gamma, \rho>$,  where $\state$ is the state space,  $\action$ is the action space, $R:  \state \times \action  \rightarrow \mathbb{R}$ is the reward function, $P:  \state \times \action \rightarrow \Delta(\state)$ is the transition probability function, $\gamma$ is the discount factor, and $\rho \in \Delta(\state)$ is the initial state distribution. The value function $V^{\pi}$ and the state-action value function $Q^{\pi}$ of a policy $\pi$ are defined as $V^{\pi}(s) = \mathbb{E} \left[ \sum_{t=0}^{\infty} \gamma^t R(s_t,a_t) | s_0 = s\right]$ and $Q^{\pi}(s,a) = \mathbb{E} \left[ \sum_{t=0}^{\infty} \gamma^t R(s_t,a_t) | s_0 = s, a_0 = a\right]$. The advantage function $A^{\pi}$ is defined as $A^{\pi}(s,a) = Q^{\pi}(s,a) - V^{\pi}(s)$. A policy $\pi$ generates a trajectory $\tau$ where $\tau = (s_0,a_0,s_1,a_1,\dots ), s_0 \sim \rho, a_t \sim \pi(s_t,\cdot), s_{t+1} \sim P(\cdot | s_t,a_t)$. Since the randomness of $\tau$ is specified by $P$ and $\pi$, we denote it as $\tau \sim (P, {\pi})$. The goal of a reinforcement learning algorithm  is to learn a policy that maximizes the expected infinite horizon discounted reward defined as $J(\pi) = \mathbb{E}_{\tau \sim (\pi, P)} \left[ \sum_{t = 0}^{\infty} \gamma^t R(s_t,a_t)\right]$. It is easy to see that $J(\pi) = \mathbb{E}_{s_{0} \sim \rho} [V^{\pi}(s_{0})]$.

The discounted state-action visitation frequency of policy $\pi$ is defined as $d^{\pi}(s, a) = (1-\gamma) \sum_{t=0}^{\infty} \gamma^t \mathbb{P}(s_t=s, a_{t}=a), s_0 \sim \rho$. The discounted state visitation frequency is defined as the marginal $d^{\pi}(s) = \sum_{a \in \action} d^{\pi}(s, a)$. It is straightforward to see that  $d^{\pi}(s,a) = \pi(s,a)d^{\pi}(s)$.


The total variation (TV) distance between two distributions $p$ and $q$ is defined as $D_{\text{TV}}(p,q) = (1/2) \sum_{x} |p(x) - q(x)|$, the average TV distance between two policies $\pi_1$ and $\pi_2$, averaged w.r.t. the $d^{\pi_{\pi}}$ is defined as $D_{\text{TV}}^{\pi} (\pi_1,\pi_2) = \mathbb{E}_{s \sim d^{\pi}} \left[ D_{\text{TV}} (\pi_1,\pi_2)\right]$.

\textbf{Gradient-based meta-learning:} The goal of a meta-learning algorithm is to learn to perform optimally in a new (testing) task  using only limited data, by leveraging the experience (data) from similar (training) tasks seen during training. Gradient-based meta-learning algorithms  achieve this goal by learning a meta-parameter which will yield a good task specific parameter after  performing  only a few gradient steps w.r.t. the task specific loss function using the limited task specific data.

Meta-learning algorithms consider a set of tasks $\mathcal{T}$ with a distribution $p$ over $\mathcal{T}$.   Each task $i \sim p(\mathcal{T})$ is also associated with a data set $\mathcal{D}_{i}$, which is typically divided into training data $\mathcal{D}_{i}^{\textrm{tr}}$ used for task specific adaptation and validation data  $\mathcal{D}_{i}^{\textrm{val}}$ used for meta-parameter update. The objective of the gradient-based meta-learning is typically formulated as
\begin{align}
    \min_{\theta} ~ \mathbb{E}_{i \sim p(\mathcal{T})} \left[  \mathcal{L}_{i}\left(\theta - \alpha \nabla_{\theta} \mathcal{L}_{i}(\theta,\mathcal{D}^{\text{tr}}_i),\mathcal{D}^{\text{val}}_i \right)  \right],
\label{eq;MAML_SL}
\end{align}
where $\mathcal{L}_{i}$ is the loss function corresponding to task $i$ and $\alpha$ is the learning rate. Here, $\theta_{i}(\theta) = \theta - \alpha \nabla_{\theta} \mathcal{L}_{i}(\theta,\mathcal{D}^{\text{tr}}_i)$ is the task specific parameter obtained by one step gradient update starting from the meta-parameter $\theta$, and the goal is to find the best meta-parameter which will minimize the meta loss function $\mathcal{L}(\theta) =  \mathbb{E}_{i \sim p(\mathcal{T})} [\mathcal{L}_{i}(\theta_{i}(\theta))] $



\paragraph{Gradient-based meta-reinforcement learning:} The gradient-based meta-learning  framework is applicable both in supervised learning and reinforcement learning. In RL, each task $i$ corresponds to an MDP with task specific model $P_{i}$ and reward function $R_{i}$. We assume that the state-action spaces are uniform across the tasks, thus ensuring the first level of task similarity. Task specific data $\mathcal{D}_{i}$ is the trajectories $\tau_{i,m} = (s^{m}_0,a^{m}_0,s^{m}_1,a^{m}_1,\dots ), s_0 \sim \rho, a_t \sim \pi(s_t,\cdot), s_{t+1} \sim P_{i}(\cdot | s_t,a_t)$ for $1 \leq m \leq M$ generated according to some policy $\pi$. Since the randomness of the trajectory $\tau_{i,m}$ is specified by $P_{i}$ and $\pi$, we denote it as $\tau_{i,m} \sim (P_{i}, \pi)$.  We consider the function approximation setting where each policy $\pi$ is represented by a function parameterized by $\theta \subset \Theta$ and is denoted  as $\pi_{\theta}$.  The task specific loss in meta-RL is  defined as  $\mathcal{L}_{\textrm{RL}}^{i} (\theta) = -J_i(\pi_{\theta})$. The gradient $\nabla_{\theta} \mathcal{L}_{\textrm{RL}}^{i} (\theta)$  can then be computed using policy gradient theorem.  

The standard meta-RL training is done follows. A  task $i \sim p(\mathcal{T})$  (usually a batch of tasks) is sampled at each iterate $k$ of the algorithm. Now, starting with meta-parameter $\theta_{k}$, the training data $\mathcal{D}^{\textrm{tr}}_{i}$ for task adaptation is generated as the trajectories $(\tau_{i,m})^{M}_{m=1}$, where $\tau_{i,m} \sim (P_{i}, \pi_{\theta_{k}})$, and the updated parameter $\theta_{i,k}$ for task $i$ is computed by policy gradient evaluated on $\mathcal{D}^{\textrm{tr}}_{i}$.  The validation data $\mathcal{D}^{\textrm{val}}_{i}$ is then collected as trajectories $(\tau_{i,m})^{M}_{m=1}$, where $\tau_{i,m} \sim (P_{i}, \pi_{\theta_{i,k}})$, and the meta-parameter $\theta_{k}$ is updated by policy gradient evaluated on $\mathcal{D}^{\textrm{val}}_{i}$.  In the next section, we will introduce a modified approach which will leverage the demonstration data for task adaptation and meta-parameter update.

\section{Meta-RL using Demonstration Data}\label{sec:algorithm}

Most gradient-based meta-RL algorithms learn the optimal meta-parameter and the task-specific parameter from scratch using on-policy approaches. These algorithms exclusively rely on the reward feedback obtained from the training and validation data trajectories collected through the on-policy roll-outs of the meta-policy and task-specific policy. However, in RL problems with sparse rewards, a non-zero reward is typically achieved only when the task is completed or near-completed. In such sparse rewards settings, trajectories generated according to a policy that is still learning may not achieve any useful reward feedback, especially in the early phase of learning. In other words, since the reward feedback is zero or near-zero, the policy gradient will also be similar, resulting in non-meaningful improvement in the policy. Hence, standard meta-RL algorithms such as MAML, which rely crucially on reward feedback, will not be able to make much progress towards learning a valuable task-specific or meta-policy in sparse reward settings. 

Learning the optimal control policy in sparse reward environments has been recognized as a challenging problem even in the standard RL setting, since  most state-of-the-art RL algorithms fail to learn any meaningful polices even after a large number of training episodes \cite{rajeswaran2017learning, kang2018policy, rengarajan2022reinforcement}. One widely accepted approach to overcome this challenge is known as \textit{learning from demonstration}, wherein demonstration data obtained from an expert \cite{rajeswaran2017learning} or inexpert policy \cite{kang2018policy,rengarajan2022reinforcement} is used to aid online learning. The intuitive idea is that, even though the demonstration data does not contain any reward feedback, it can be used to guide the learning agent to reach non-zero reward regions of state-action spaces. This guidance,  usually in the direction of the goal/target, is achieved by inferring some pseudo reward signal through supervised learning approaches using demonstration data.  

\textit{Can we enhance the performance of meta-RL algorithms  in sparse reward environments by using demonstration data from sub-optimal experts?} Meta-RL in sparse reward environments is significantly more challenging than that of the standard RL setting.  This is because the reward feedback serves the dual objectives of adapting the meta-parameter to specific tasks and for updating the meta-parameter itself. We note that demonstration data helps with both of these objectives. Firstly, use of demonstration data to guide task-specific adaptation becomes important because adaptation is achieved in one or a few gradient steps, and policy resulting from each adaptation step might not achieve meaningful reward in a sparse reward setting. Secondly, making use of demonstration data for meta-parameter update is equally important because of the role of meta-policy as a reward-yielding exploratory policy. Intuitively, the meta-policy should yield trajectories that reach in the vicinity of the reward-achieving region of the state-action spaces. This does not happen in sparse reward environments. However, using the guidance from demonstration data, the task-specific policy obtained after the task adaptation may be able to generate trajectories that will reach within the reward-achieving region resulting in performance acceleration of the meta-policy.

For meta-learning with demonstration, we assume that each task $i$ is associated with demonstration data $\mathcal{D}^{\textrm{dem}}_{i},$ which contains a trajectory generated according to a demonstration policy $\pi^{\textrm{dem}}$ in an environment with model $P_{i}$. We \textit{do not assume} that $\pi^{\textrm{dem}}$ is  the optimal policy for task $i$ because in many real-world applications $\mathcal{D}^{\textrm{dem}}_{i}$ could be generated using an inexpert policy. Our key idea is to enhance task adaptation using the demonstration data by introducing an additional gradient term corresponding to the supervised learning guidance loss. We define the supervised learning loss function for  task $i$ as $\mathcal{L}_{i}^{\textrm{BC}}(\theta,\mathcal{D}^{\textrm{dem}}_i) = - \sum_{(s, a) \in \mathcal{D}^{\textrm{dem}}_i}  \log \pi_{\theta}(a|s)$. We note that, though this loss function is the same as in behavior cloning (BC), we use it directly in the gradient update instead of performing a simple warm start.  This approach is known to achieve superior performance than naive BC warm starting in standard RL problem under the sparse reward setting \cite{rajeswaran2017learning, kang2018policy, rengarajan2022reinforcement}.  The task adaptation step at iteration $k$, starting with meta-parameter $\theta_{k}$ is now obtained as 
\begin{align}
    \theta_{k,i} = \theta_k - \alpha\nabla_{\theta}\left(\mathrm{w_{rl}} \mathcal{L}^{\textrm{RL}}_i(\theta; \mathcal{D}^{\textrm{tr}}_{i})  + \mathrm{w_{bc}}\mathcal{L}^{\textrm{BC}}_{i}(\theta, \mathcal{D}^{\textrm{dem}}_i)\right) |_{\theta = {\theta_k}},
\label{eq:adapt_rl_bc}
\end{align}
where $\mathrm{w_{rl}}$ and $\mathrm{w_{bc}}$ are hyperparameters that control the extant to which  RL and demonstration data influence the gradient. 

The next question is: how do we use demonstration data in the meta-parameter update? One approach is  to use only the demonstration data with a supervised learning loss function for updating the meta-parameter as done in \cite{mendonca2019guided}.  We conjecture that such a reduction to supervised learning will severely limit the learning capability of the algorithm.  Firstly, if demonstration data is obtained from an inexpert policy, this approach will never be able to achieve the optimal performance.  This is because the role of the meta-policy as a reward-yielding exploratory policy will be limited by true performance of the inexpert policy.  Secondly, the task-specific policies obtained according to \eqref{eq:adapt_rl_bc} may be able to reach within the reward-yielding region of state-action space as we mentioned before. Hence, the validation data $\mathcal{D}^{\textrm{val}}_i$  collected through the roll-out of the policies obtained after task adaptation might contain extremely valuable reward feedback.  Utilizing this data could potentially have a significant impact on improving the learning of the meta-parameter. Thus, in our approach, we update the meta-parameter using the RL loss with policy gradient as follows. 
\begin{align}
    \label{eq:meta-update-emrld}
\theta_{k+1} = \theta_{k} - \beta \nabla_{\theta} \sum_{i} \mathcal{L}^{\textrm{RL}}_i(\theta - \alpha\nabla_{\theta}\left(\mathrm{w_{rl}} \mathcal{L}^{\textrm{RL}}_i(\theta; \mathcal{D}^{\textrm{tr}}_{i})  + \mathrm{w_{bc}}\mathcal{L}^{\textrm{BC}}_{i}(\theta, \mathcal{D}^{\textrm{dem}}_i)\right); \mathcal{D}^{\textrm{val}}_{i})
\end{align}
We note that the demonstration data is indeed used in the meta-parameter update implicitly, as its impact can be observed  in \eqref{eq:meta-update-emrld}.  We found empirically that the double use of  the demonstration data, either by adding an additional gradient through a BC loss function, or by replacing $\mathcal{D}^{\textrm{val}}_i$ with $\mathcal{D}^{\textrm{dem}}_i$ results in similar or worse performance than the approach described above.

We now formally present our algorithm called Enhanced Meta-RL {using} Demonstrations (EMRLD).  
\begin{algorithm}
\caption{Enhanced Meta-RL using Demonstrations (EMRLD)}	
	\label{alg:RFQI-Algorithm}
\label{alg:alg:meta-lfd-pg}
\begin{algorithmic}[1]
  \STATE {\bfseries Input:} Set $\mathcal{T}$ of $N$ tasks, demonstration data $\mathcal{D}^{\textrm{dem}}_{i}$ for task $i=1,\dots,N$
  \STATE {\bfseries Input:} Adaptation learning rate $\alpha$, meta-learning rate $\beta$
  \STATE Initialize meta parameter $\theta_0$
  \FOR{$k=0,1,\dots$}
  \FOR{$i \in \mathcal{T}$}
  \STATE  Execute the meta policy $\pi_{\theta_k}$ for task $i$ to collect $\mathcal{D}^{\textrm{tr}}_{i}$
  \STATE \textbf{Task adaptation:} Compute task adapted parameter $\theta_{k,i}$ using $\mathcal{D}^{\textrm{tr}}_{i}$ and $\mathcal{D}^{\textrm{dem}}_{i}$ according to  \ref{eq:adapt_rl_bc}
  \STATE  Collect data $\mathcal{D}^{\textrm{val}}_{i}$ by executing adapted policy $\pi_{\theta_{k,i}}$ for meta-policy update
  \ENDFOR
  \STATE \textbf{Meta-parameter update:}  Update meta-policy  according to  \ref{eq:meta-update-emrld} using $\mathcal{D}^{\textrm{val}}_{i}$ 
  \ENDFOR
\end{algorithmic}
\end{algorithm}

We now present a theoretical justification of why EMRLD should have a superior performance in the sparse reward setting as compared to  other gradient based algorithms that do not use demonstration data.  First, we introduce some notation. Let $\pi_{k} = \pi_{\theta_{k}}$ be the meta-policy used  at iteration $k$ of our algorithm.  Also, let $\pi_{k,i}$ be the policy obtained after task-specific adaptation for task $i$.  Recall that $J_{i}(\pi_{k,i})$ is the value of the policy for the MDP corresponding to task $i$.  Similarly, we can define the state-action value function and advantage function of policy $\pi_{k,i}$ for task $i$ as $Q_{i}^{\pi_{k,i}}$ and $A_{i}^{\pi_{k,i}}$, respectively. Also, let $d^{\pi_{k,i}}_{i}$ be the visitation frequency of policy $\pi_{k,i}$ for task $i$.  Now, we can define the value of the meta-policy $\pi_{k}$ over the ensemble of all tasks as $J_{\textrm{meta}}(\pi_{k}) = \mathbb{E}_{i \sim p(\mathcal{T})} [J_{i}(\pi_{k,i})]$. 

If the demonstration data has to be useful, it should provide a reasonable amount of guidance.  In particular, we  would like the task-specific policy adapted using this data to collect feedback that would ensure good meta-policy updates, particularly in the initial stages of meta-training. Since the capability of the demonstration data to guide adaptation will depend on the demonstration policy $\pi^{\textrm{dem}}_{i}$ according to which it is generated, we make the following assumption about $\pi^{\textrm{dem}}_{i}$.  

\begin{assumption}
\label{asmp:behavior_data}
During the initial stages of meta-training, $ \mathbb{E}_{a \sim  \pi^{\textrm{dem}}_{i}(s, \cdot)}[A_i^{\pi_{k,i}}\left(s, a\right) ] \geq \Delta$, for all $s \in \state$ and $i \in \mathcal{T}$. 
\end{assumption}
Assumption \ref{asmp:behavior_data} implies that during the initial stages of meta-training, the demonstration policy can provide a higher advantage on average than the current policy adapted to that task. This is a reasonable assumption, since any reasonable demonstration policy is likely to perform much better than an untrained policy in the initial phase of learning. We also note that a similar assumption was used in learning from demonstration literature \cite{kang2018policy, rengarajan2022reinforcement}

We now present the performance improvement result for EMRLD. 
\begin{theorem}
Let $\pi_{k} = \pi_{\theta_{k}}$ be the meta-policy used  at iteration $k$ of our algorithm and let $\pi_{k,i}$ be the policy obtained after task adaptation in task $i$. Let Assumption \ref{asmp:behavior_data} holds for $\pi_{k}$. Then, 
\begin{align*}
        &J_{\textrm{meta}}(\pi_{k+1}) - J_{\textrm{meta}}({\pi_k}) \geq  \left( \frac{1}{1-\gamma} \mathbb{E}_{\substack{i \sim p(\mathcal{T}),  (s,a) \sim d^{\pi_{k,\task}}_{i}}} \left[ \frac{\pi_{k+1,\task}(s,a)}{\pi_{k,\task}(s,a)} A_i^{\pi_{k,\task}}\left(s, a\right) \right] \right. \\
        &\left. - \frac{2C_1}{1-\gamma}\mathbb{E}_{i\sim p(\mathcal{T})}\left[ D_{TV}^{\pi_{k,\task}} \left(\pi_{k+1,\task}, \pi_{k,\task} \right)\right]  \right)+ \left( \frac{\Delta}{1-\gamma} - \frac{2C_1}{1-\gamma}\mathbb{E}_{i \sim p(\mathcal{T})}\left[ D_{TV}^{\pi_{k+1,\task}} \left(\pi_{k+1,\task}, \pi^{\textrm{dem}}_{i} \right)\right] \right)
\end{align*}
\label{thm:improvement}
\end{theorem}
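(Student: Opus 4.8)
The plan is to reduce the meta-level statement to a per-task inequality and to lower bound each task's improvement $J_i(\pi_{k+1,i}) - J_i(\pi_{k,i})$ in two complementary ways --- a trust-region/CPI-style estimate that produces the first bracket, and a telescoping estimate through the demonstration policy $\pi^{\textrm{dem}}_{i}$ that produces the second --- and then combine these and average over tasks. Since $J_{\textrm{meta}}(\pi_{k+1}) - J_{\textrm{meta}}(\pi_k) = \mathbb{E}_{i\sim p(\mathcal{T})}[J_i(\pi_{k+1,i}) - J_i(\pi_{k,i})]$ by definition and linearity of expectation, it is enough to work with a fixed task $i$. The workhorses are the performance difference lemma, $J_i(\mu) - J_i(\nu) = \tfrac{1}{1-\gamma}\mathbb{E}_{s\sim d^{\mu}_{i},\, a\sim\mu}[A_i^{\nu}(s,a)]$, and the elementary change-of-measure bound that, for a function bounded by $C_1$, swapping the action law $\nu$ for $\mu$ inside an expectation costs at most $2C_1 D_{\text{TV}}(\mu,\nu)[s]$ per state, used together with the identity $\mathbb{E}_{a\sim\mu}[A_i^{\mu}(s,a)] = 0$. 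I take $C_1$ to be a uniform bound on the task- and policy-indexed advantage functions, absorbing the $\gamma/(1-\gamma)$ factors that surface in the visitation-shift estimate below; note that the inequality is a statement about the policy values only, so the precise form of the updates \eqref{eq:adapt_rl_bc}--\eqref{eq:meta-update-emrld} is not needed, only Assumption~\ref{asmp:behavior_data}.

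For the first bracket I would run the standard CPI (conservative policy iteration / trust region) derivation per task: apply the performance difference lemma to the pair $(\pi_{k+1,i},\pi_{k,i})$, replace the state law $d^{\pi_{k+1,i}}_{i}$ by $d^{\pi_{k,i}}_{i}$ at cost $2C_1 D_{\text{TV}}(d^{\pi_{k+1,i}}_{i}, d^{\pi_{k,i}}_{i})$, bound this visitation shift by a constant multiple of $D_{\text{TV}}^{\pi_{k,i}}(\pi_{k+1,i},\pi_{k,i})$ via the usual contraction argument, and rewrite the surviving $\mathbb{E}_{a\sim\pi_{k+1,i}}[\cdot]$ as the importance ratio $\pi_{k+1,i}(s,a)/\pi_{k,i}(s,a)$ under $d^{\pi_{k,i}}_{i}$, recovering exactly the first bracket. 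For the second bracket I would telescope $J_i(\pi_{k+1,i}) - J_i(\pi_{k,i}) = (J_i(\pi_{k+1,i}) - J_i(\pi^{\textrm{dem}}_{i})) + (J_i(\pi^{\textrm{dem}}_{i}) - J_i(\pi_{k,i}))$. The second difference equals $\tfrac{1}{1-\gamma}\mathbb{E}_{s\sim d^{\pi^{\textrm{dem}}_{i}}_{i},\, a\sim\pi^{\textrm{dem}}_{i}}[A_i^{\pi_{k,i}}(s,a)] \ge \tfrac{\Delta}{1-\gamma}$, directly by Assumption~\ref{asmp:behavior_data}, since its integrand is at least $\Delta$ for every $s$. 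For the first difference I would use the performance difference lemma in the form $J_i(\pi_{k+1,i}) - J_i(\pi^{\textrm{dem}}_{i}) = \tfrac{1}{1-\gamma}\mathbb{E}_{s\sim d^{\pi_{k+1,i}}_{i},\, a\sim\pi_{k+1,i}}[A_i^{\pi^{\textrm{dem}}_{i}}(s,a)]$, then invoke $\mathbb{E}_{a\sim\pi^{\textrm{dem}}_{i}}[A_i^{\pi^{\textrm{dem}}_{i}}(s,a)] = 0$ to write the inner expectation as $\sum_a(\pi_{k+1,i}(a|s) - \pi^{\textrm{dem}}_{i}(a|s))A_i^{\pi^{\textrm{dem}}_{i}}(s,a) \ge -2C_1 D_{\text{TV}}(\pi_{k+1,i},\pi^{\textrm{dem}}_{i})[s]$, and average over $s\sim d^{\pi_{k+1,i}}_{i}$ to get the $-\tfrac{2C_1}{1-\gamma}D_{\text{TV}}^{\pi_{k+1,i}}(\pi_{k+1,i},\pi^{\textrm{dem}}_{i})$ term. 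Combining the two per-task lower bounds and taking $\mathbb{E}_{i\sim p(\mathcal{T})}$ gives the claim.

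The crux is the estimate of $J_i(\pi_{k+1,i}) - J_i(\pi^{\textrm{dem}}_{i})$: to make the demonstration term appear against the reference distribution $d^{\pi_{k+1,i}}_{i}$ that occurs in the theorem (rather than $d^{\pi^{\textrm{dem}}_{i}}_{i}$), one must expand this difference using $A_i^{\pi^{\textrm{dem}}_{i}}$ and exploit that it is zero-mean under $\pi^{\textrm{dem}}_{i}$; expanding instead with $A_i^{\pi_{k+1,i}}$ puts the total-variation penalty against $d^{\pi^{\textrm{dem}}_{i}}_{i}$ and does not match the statement. A secondary point needing care is the bookkeeping when the two per-task bounds --- both lower bounds on the same quantity $J_i(\pi_{k+1,i}) - J_i(\pi_{k,i})$ --- are merged and the constant $C_1$ is fixed, so that the $\gamma$ and $1/(1-\gamma)$ factors line up with the coefficients shown in the two brackets.
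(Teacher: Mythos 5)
Your two component derivations are each individually sound: the CPI/trust-region argument gives $J_i(\pi_{k+1,i})-J_i(\pi_{k,i}) \geq (\text{first bracket})$ (up to the constant multiplying the TV term), and the telescoping through $\pi^{\textrm{dem}}_i$ gives $J_i(\pi_{k+1,i})-J_i(\pi_{k,i}) \geq (\text{second bracket})$. But the final step---``combining the two per-task lower bounds''---is where the argument breaks: two lower bounds $X\geq L_1$ and $X\geq L_2$ on the \emph{same} quantity yield $X\geq\max(L_1,L_2)$, not $X\geq L_1+L_2$, and the theorem asserts the sum. Neither bracket is sign-controlled (indeed the whole point of the result is that both can be positive), so there is no way to pass from the max to the sum. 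This is not the ``secondary bookkeeping'' issue you flag at the end; it is the central obstruction, and the proposal as written does not establish the stated inequality.

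The paper's proof avoids this by never producing two separate lower bounds. It applies the performance difference lemma once, writes $(1-\gamma)\left(J_{\textrm{meta}}(\pi_{k+1})-J_{\textrm{meta}}(\pi_k)\right)$ as an \emph{exact} sum of (i) the importance-sampling term $\mathbb{E}_{i,(s,a)\sim d_i^{\pi_{k,i}}}\left[\tfrac{\pi_{k+1,i}(s,a)}{\pi_{k,i}(s,a)}A_i^{\pi_{k,i}}(s,a)\right]$ and (ii) the state-visitation-shift remainder $\sum_i p(i)\sum_s\left(d_i^{\pi_{k+1,i}}(s)-d_i^{\pi_{k,i}}(s)\right)\sum_a\pi_{k+1,i}(s,a)A_i^{\pi_{k,i}}(s,a)$, and then lower-bounds only (ii), by adding and subtracting $\pi^{\textrm{dem}}_i$ under $d_i^{\pi_{k+1,i}}$ and $\pi_{k,i}$ under $d_i^{\pi_{k,i}}$: the $\pi^{\textrm{dem}}_i$ piece yields $\Delta$ by Assumption~\ref{asmp:behavior_data}, the $\pi_{k,i}$ piece vanishes because the advantage is zero-mean under its own policy, and the two residuals give the two $-2C_1 D_{TV}$ penalties (both taken against $A_i^{\pi_{k,i}}$, hence a single constant $C_1$ and no $\gamma/(1-\gamma)$ contraction factor from bounding $D_{\text{TV}}(d^{\pi_{k+1,i}}_i,d^{\pi_{k,i}}_i)$). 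So the leading term of the first bracket enters as an identity, not an estimate, and $\Delta$ together with both TV terms all come from a single decomposition of (ii). To rescue your route you would have to apportion the improvement into disjoint pieces each bounded exactly once---which is precisely what this decomposition accomplishes.
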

Theorem \ref{thm:improvement} presents a lower  bound for the meta policy improvement as a sum of two groups of terms.  Maximizing the first term in group one with a constraint on its second term will ensure a higher lower bound and hence an improvement in the meta-parameter training.  We notice that this is indeed achieved by the TRPO step used in the meta-parameter update. Hence, this first group is the same for any MAML-type of algorithm. The advantage of the demonstration data is revealed in the second group of terms. The term ${\Delta}/{(1-\gamma)}$ adds a positive quantity to the lower bound, and this contribution from this second group of term can be maximized by minimizing $\mathbb{E}_{i \sim p(\mathcal{T})}[ D_{TV}^{\pi_{k+1,\task}} \left(\pi_{k+1,\task}, \pi^{\textrm{dem}}_{i} \right)]$.  However, this minimization is hard to perform in practice because estimating $D_{TV}^{\pi_{k+1,\task}}$ requires sampling the data according to $\pi_{k+1,\task}$, and this is not feasible at iteration $k$. Hence, in practice, we replace that term by $\mathbb{E}_{i\sim \mathcal{T}}[ D_{TV}^{\pi^{\textrm{dem}}_{i}} \left(\pi_{k+1,\task}, \pi^{\textrm{dem}}_{i} \right)]$.  {This can be easily achieved by including the standard maximum likelihood objective in the adaptation step.}  Thus, EMRLD both exploits the advantage offered by an RL step, as well as that of behavior cloning for meta-policy optimization.


We can further improve the performance of EMRLD by including a behavior cloning warm starting step before performing the update \eqref{eq:adapt_rl_bc}. We simplify this warm start to a one step gradient as, $  \theta_{\tilde{k},i} = \theta_k - \alpha\nabla_{\theta}\mathcal{L}^{\textrm{BC}}_{i}(\theta, \mathcal{D}^{\textrm{dem}}_i) |_{\theta = {\theta_k}}$, and then do the task adaptation as in \eqref{eq:adapt_rl_bc} starting with $\theta_{\tilde{k},i}$. We call this version of our algorithm  as EMRLD-WS.  Such a warm start is likely to provide more meaningful samples than directly rolling out the meta-policy to obtain samples for task-specific adaptation.    In the next section, we will see empirically how our design choices for EMRLD and EMRLD-WS enable them to learn policies that provide  higher rewards using only a small amount of (even sub-optimal) demonstration data.

\begin{figure*}[!h]
\centering
\begin{subfigure}{\linewidth}
\centering
\includegraphics[width=0.25\linewidth,trim=20 0 120 5,clip]{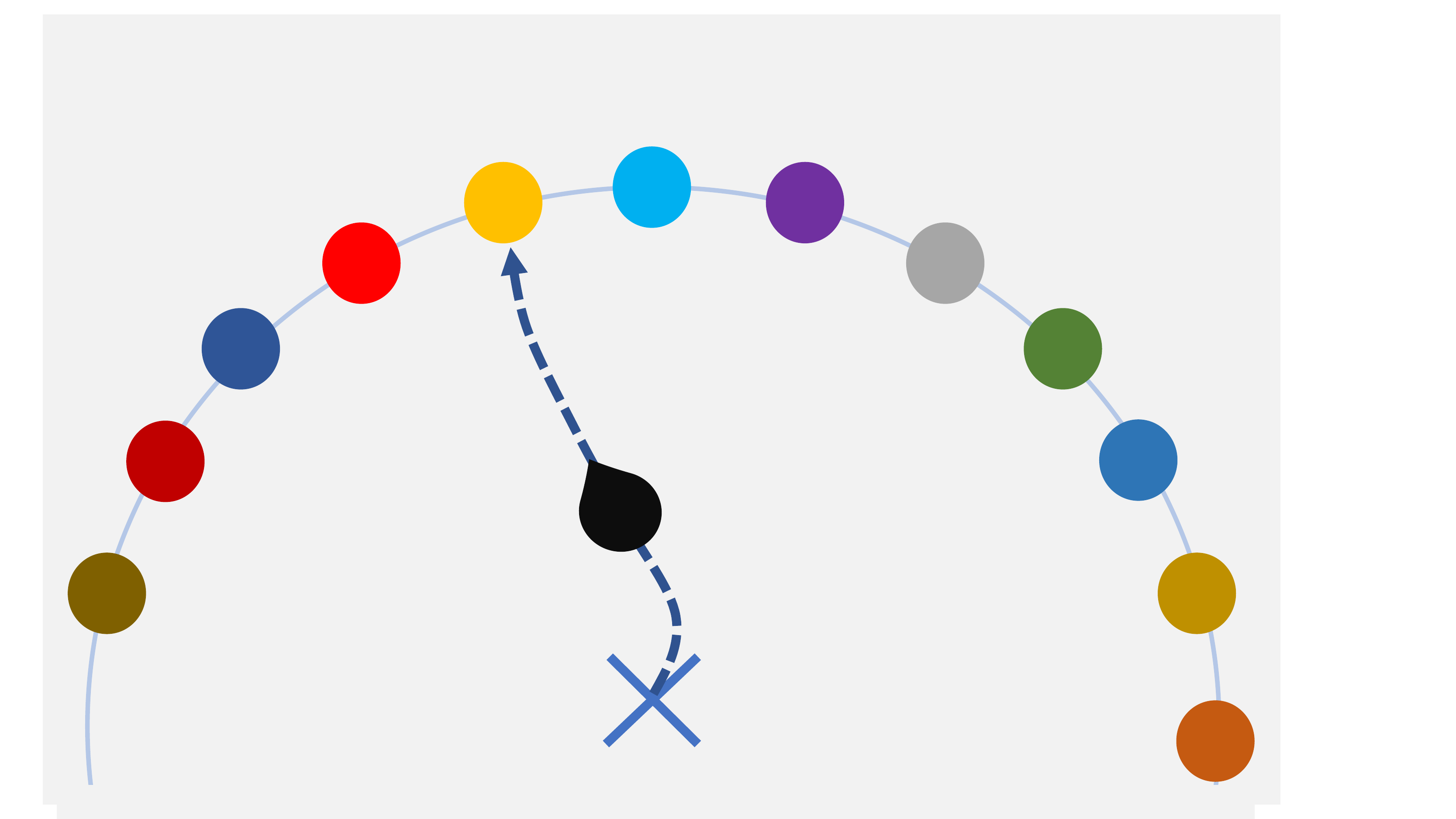}
\includegraphics[width=0.25\linewidth, trim=20 0 120 5,clip]{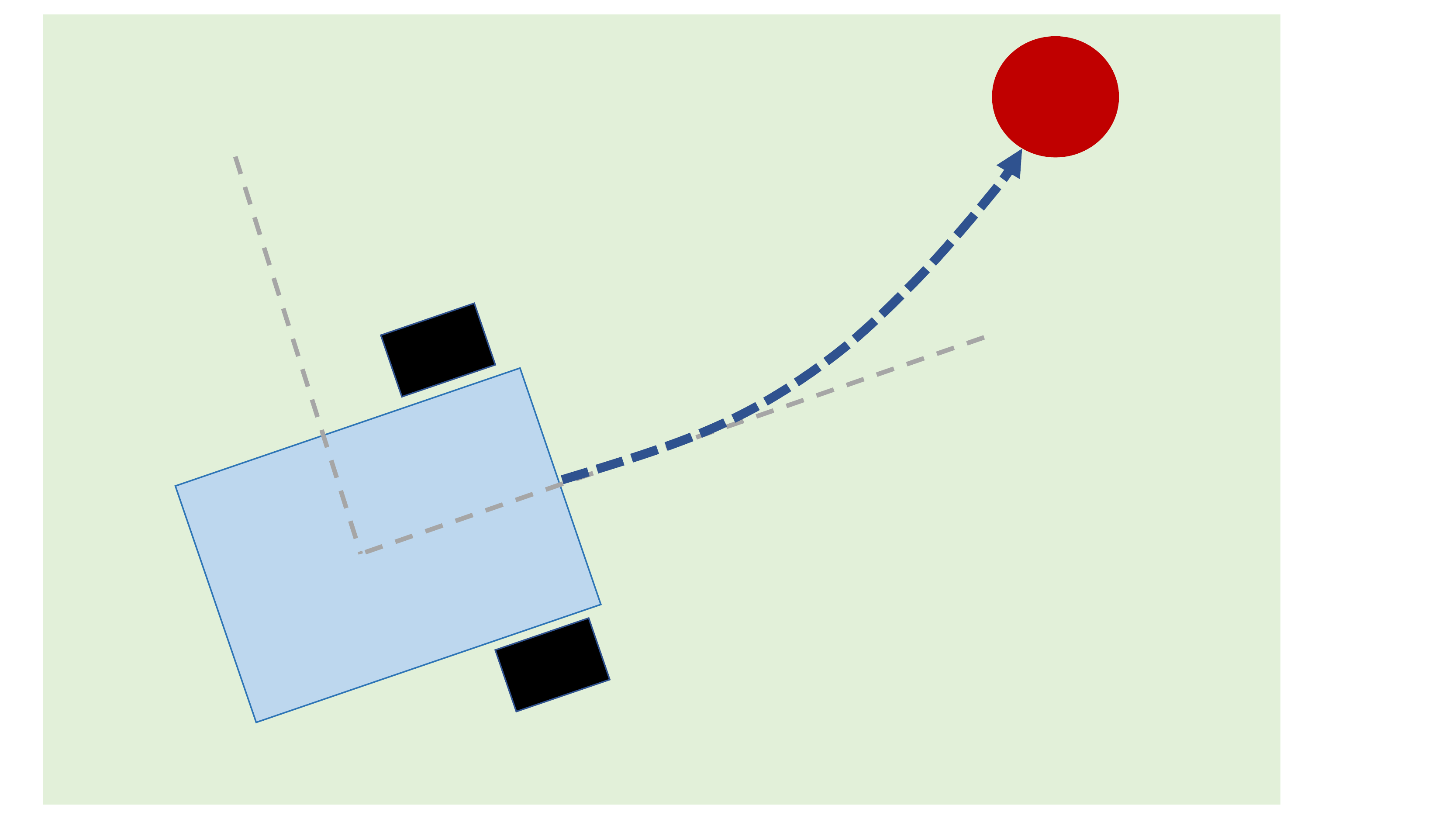}
\includegraphics[width=0.25\linewidth, trim=10 0 10 95,clip]{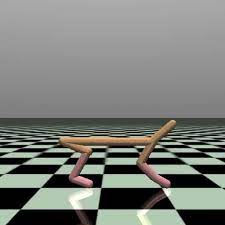}
\end{subfigure}
\caption{Point2D Navigation (first), Two Wheeled Locomotion (second) and HalfCheetah (last).}
\label{fig:environments}
\end{figure*}
\section{Experimental Evaluation}\label{sec:Experiments}

We evaluate the performance of EMRLD based on whether the meta-policy it generates is a good initial condition for task-specific adaptation in sparse-reward environments over (i) Tasks already seen in training and (ii) New unseen tasks.  
We seek to validate the conjecture that in the sparse-reward setting, EMRLD should be able to leverage even demonstrations of inexpert policies to attain high test performance over previously unseen tasks.  We do so with regard to two classes of tasks, namely,


$\bullet$  Tasks that differ in their reward functions: Simulation experiments on Point2D Navigation \cite{finn2017model}, TwoWheeled Locomotion \cite{gupta2018meta}, and HalfCheetah \cite{wawrzynski2009cat, todorov2012mujoco} 

$\bullet$  Tasks that differ in the environment dynamics:   Real-world experiments using a TurtleBot, which is a two-wheeled differential drive robot~\cite{amsters2019turtlebot}.



 
\subsection{Experiments on simulated environments}

\paragraph{Sparse multi-task environments} We present simulation results for three standard environments shown in Figure~\ref{fig:environments} and described below.   We train over a small number of tasks that differ in their reward functions.  We generate unknown tasks for test by randomly modifying the reward function. 

\textit{Point2D Navigation} is a 2 dimensional goal-reaching environment. The states are the $(x,y)$ location of the agent on a 2D plane. The actions are appropriate 2D displacements $(dx,dy)$. Training tasks are defined by a fixed set of $12$ goal locations on a semi-circle of radius $2$. The agent is given a zero reward everywhere except when it is a certain distance near the goal location, making the reward function highly \textit{sparse}.  Within a single task, the objective of the agent is to reach the goal location in the least number of time steps starting from origin.  Test tasks are generated by sampling any point on the semicircle as the goal.

\textit{TwoWheeled Locomotion} environment is a goal-reaching with sparse rewards, similar to Point2D Navigation.  However, the robot is constrained by the permissible actions (limits on angular and linear velocity) and trajectories feasible based on the turning radius of the robot.  Here, our training tasks are a fixed set of $24$ goal locations on a semi-circle of radius $2,$ while test goals are sampled randomly.  Further details on state-space and dynamics are provided in the Appendix. 

\textit{HalfCheetah Forward-Backward} consists of two tasks in which the   HalfCheetah agent learns to either move in the forward (task 1) or backward (task 2) directions with as high velocity as possible.  The agent gets a reward only after it has moved a certain number of units along the x-axis in the correct direction, making the rewards sparse.  Training and test are under the same two tasks.

\paragraph{Optimal data and sub-optimal data} \begin{wrapfigure}{r}{0.4\textwidth}
\centering
\begin{subfigure}{\linewidth}
\centering
\vspace{-0.2in}
\includegraphics[width=0.48\linewidth]{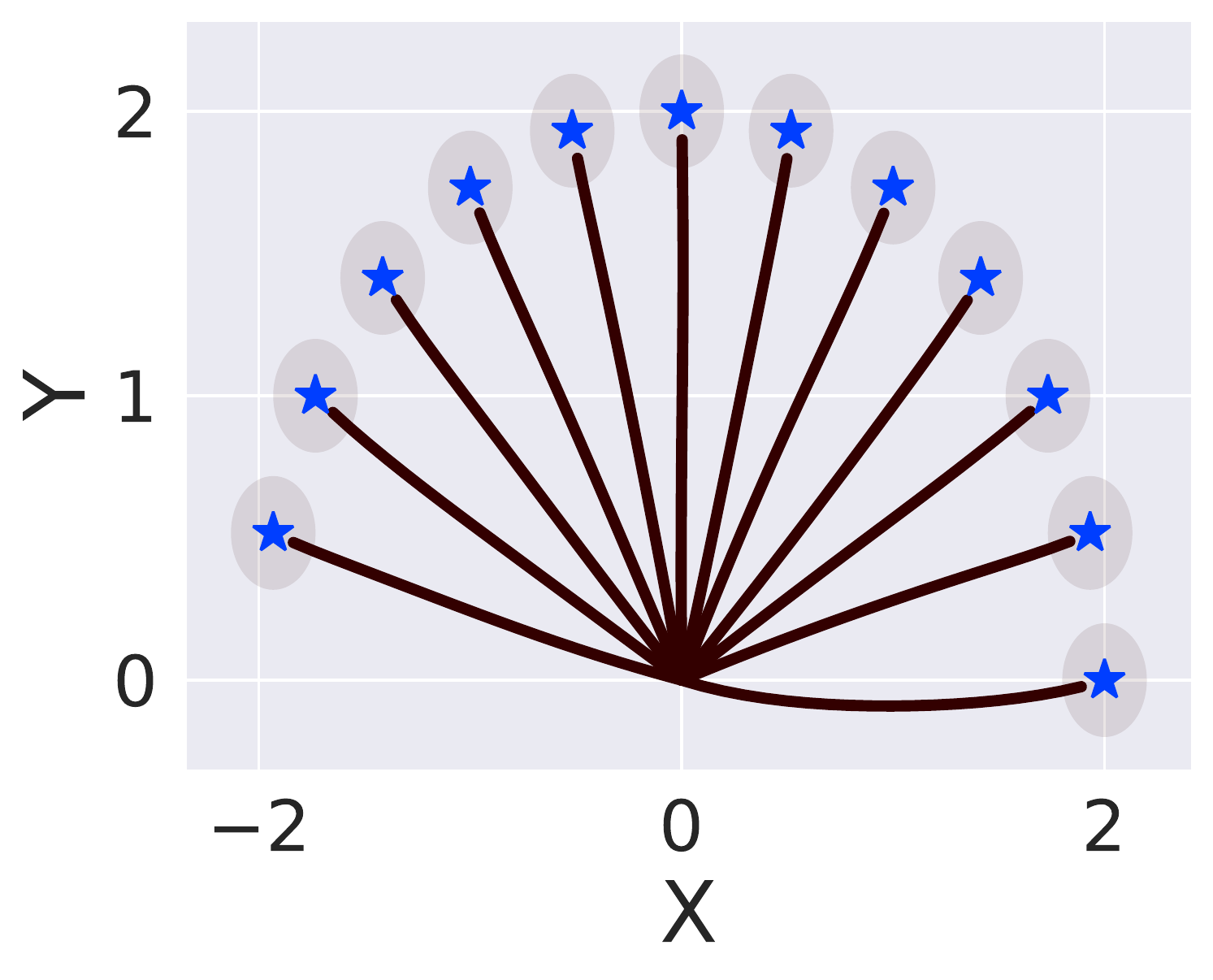}
\includegraphics[width=0.48\linewidth]{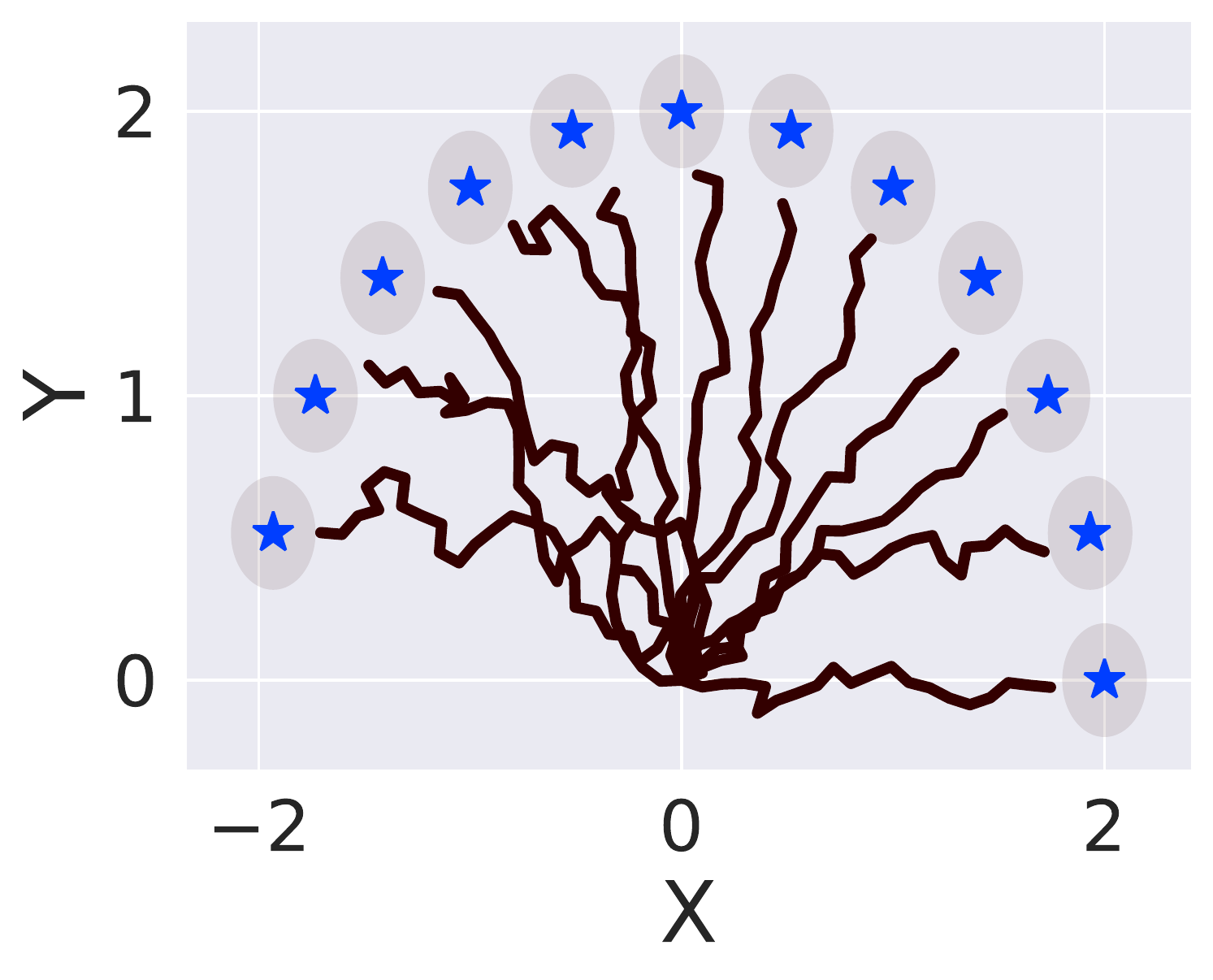}
\end{subfigure}
\vspace{-0.05in}
\caption{Optimal and Sub-optimal demonstrations for Point2D Navigation.}
\label{fig:good-bad}
\vspace{-0.2in}
\end{wrapfigure} 
We provide a limited amount of demonstration data in the form of \textit{just one trajectory per task for guidance}. Optimal data consists of $(s,a)$ transitions generated by an expert policy trained using TRPO.  Sub-optimal data is generated by an inexpert, partially trained TRPO policy with induced action noise and truncated trajectories as shown in Figure~\ref{fig:good-bad}.


\textbf{Baselines}  We compare the performance of our algorithm against the following gradient based meta-reinforcement learning algorithms: (i) \textbf{MAML:} \cite{finn2017model} The standard MAML algorithm for meta-RL (ii) \textbf{Meta-BC:} A variant of \cite{finn2017one}; this is a supervised learning/behavior cloning version of MAML, where the maximum likelihood loss is used in the adaptation as well as the meta-optimization steps. (iii) \textbf{GMPS:} Guided meta policy search~\cite{mendonca2019guided}, which uses RL for gradient based adaptation, and uses demonstration data for supervised meta-parameter optimization. The implemention of our algorithms and baselines is based on a publicly available meta-learning code base \cite{Arnold2020-ss} licensed under the MIT License.
\begin{figure*}[!ht]
\centering
\begin{subfigure}{\linewidth}
    \centering
    \includegraphics[width=0.85\linewidth,trim=0 0 0 0,clip]{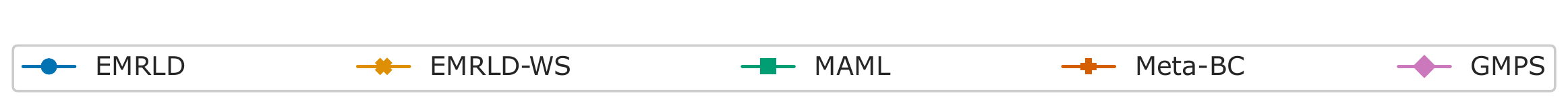}
\end{subfigure}
\vspace{-0.05in}
\begin{subfigure}{\linewidth}
    \centering
    \includegraphics[width=\linewidth,height=1in]{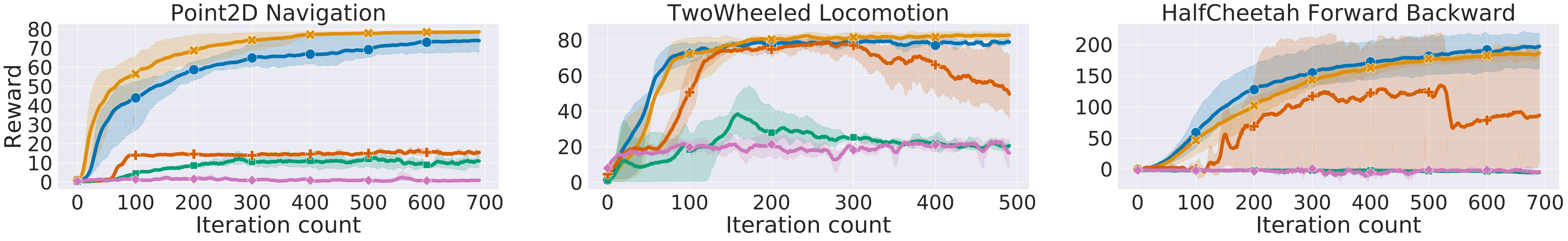}
\end{subfigure}
\begin{subfigure}{\linewidth}
    \centering
    \includegraphics[width=\linewidth,height=1in]{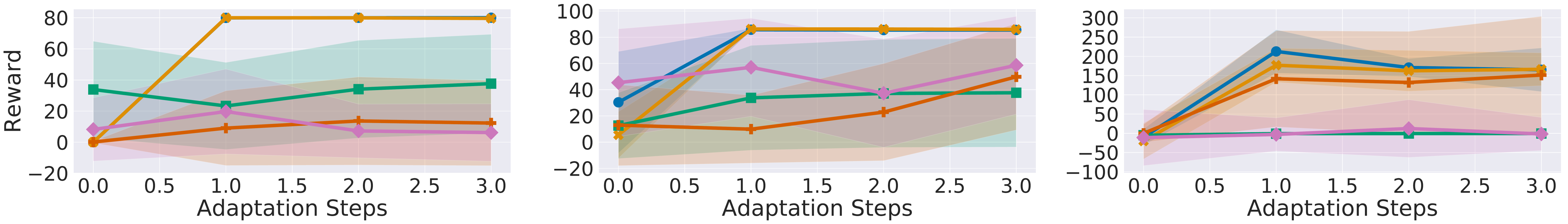}
\end{subfigure}
\caption{{Training (top) and test (bottom) plots on 2D Navigation, Wheeled locomotion and Half Cheetah with \textbf{optimal demonstration data}. For training curves, a solid line corresponds to the mean over $3$ seeds and the shaded region corresponds to the standard deviation over them. For the test plots, a solid line corresponds to the mean performance over all testing tasks, and the shaded region corresponds to the standard deviation over them.}}
\label{fig:good_data_train_adapt}
\end{figure*}

\begin{figure*}[!ht]
\centering
\begin{subfigure}{\linewidth}
    \centering
    \includegraphics[width=0.85\linewidth,trim=0 0 0 0,clip]{sections/figures/legend/legend.pdf}
\end{subfigure}
\vspace{-0.05in}
\begin{subfigure}{\linewidth}
    \centering
    \includegraphics[width=\linewidth,height=1in]{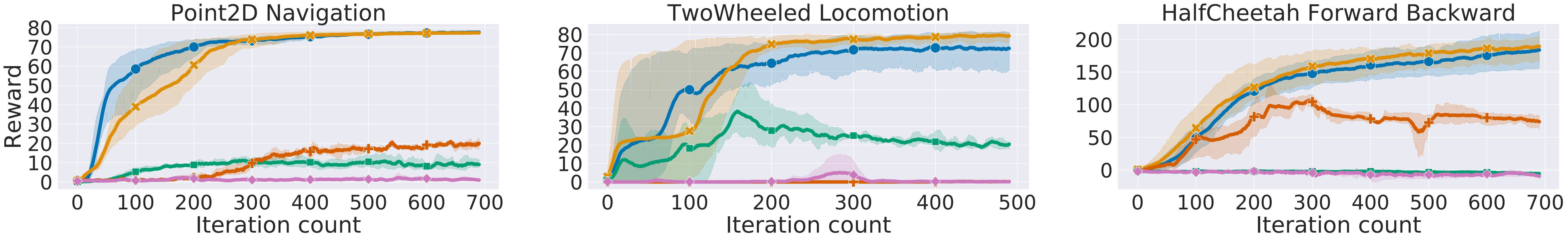}
\end{subfigure}
\begin{subfigure}{\linewidth}
    \centering
    \includegraphics[width=\linewidth,height=1in]{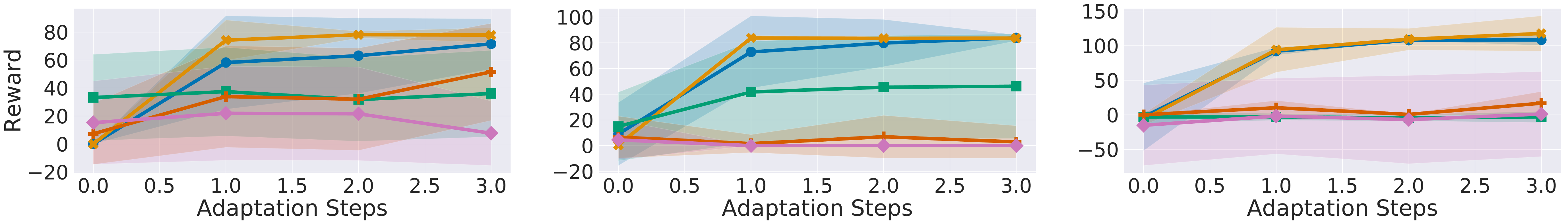}
\end{subfigure}
\caption{{Training (top) and adaptation (bottom) for 2D Navigation, Wheeled locomotion and Half Cheetah with \textbf{sub-optimal demonstration data}. Notation is similar to  Figure~\ref{fig:good_data_train_adapt}.}}
\label{fig:bad_data_train_adapt}
\end{figure*}



\paragraph{Performance with optimal demonstration data:}
We illustrate the training and testing performance of the different algorithms trained and tested with optimal data in Figure~\ref{fig:good_data_train_adapt}. The top row of Figure~\ref{fig:good_data_train_adapt} shows the average adapted return across training tasks of the meta-policy during training iterations. The bottom row of Figure~\ref{fig:good_data_train_adapt} shows the average return of the trained meta-policy adapted across testing tasks over adaptation steps. We see that our algorithms out perform the others by obtaining the highest average return, and are able to quickly adapt to testing tasks with just one adaptation step and one trajectory of demonstration data.  Additionally, our algorithms demonstrate a nearly-monotone improvement in average return demonstrating stable learning. Meta-BC fails and has unstable training performance as the amount of demonstration data available per task is very small. Training over only a small number of tasks further hampers the  performance of Meta-BC.  
MAML and GMPS fail to learn due to sparsity of the environment as the purely RL adaptation step incurs almost zero reward, and hence, negligible learning signal. Furthermore, GMPS is hampered in the meta update step due to availability of only a small amount of demonstration data per-task.

\paragraph{Performance with  sub-optimal demonstration data:} EMRLD uses a combination of RL and imitation, which is valuable when presented with sub-optimal demonstrations.  For the Point2D Navigation environment, we collect sub-optimal data for each task using a partially trained agent with induced action noise, and truncate the trajectories short of the reward region.  Hence, pure imitation cannot reach the goal.  For the TwoWheeled Locomotion environment, we collect data in a similar fashion for all tasks, but remove state-action pairs at the beginning of each trajectory.  Since the first few state-action pairs contain information on how to orient the two-wheeled agent towards the goal, this truncation eliminates the possibility of direct imitation being successful.  Similarly, in HalfCheetah we use a partially trained policy and truncate trajectories before they reach the reward bearing region.  Figure~\ref{fig:bad_data_train_adapt} illustrates that EMRLD outperforms all the baselines and is quickly able to adapt to unseen tasks, emphasizing the benefit of its RL component.  Meta-BC and  GMPS fail because they are restricted by the optimality of the data, and the absence of crucial information greatly impacts their performance. MAML again fails due to the sparsity of the reward. 


We conclude by presenting in Figure~\ref{fig:2wheel-trajs}, sets of trajectories generated during testing in the TwoWheeledLococmotion environment when provided with optimal or sub-optimal demonstration data.  The variants of EMRLD clearly outperform the others, showing their strength in the sparse reward setting.


\begin{figure}[t!]
\begin{subfigure}{.19\linewidth}
\centering
\includegraphics[width=1\linewidth]{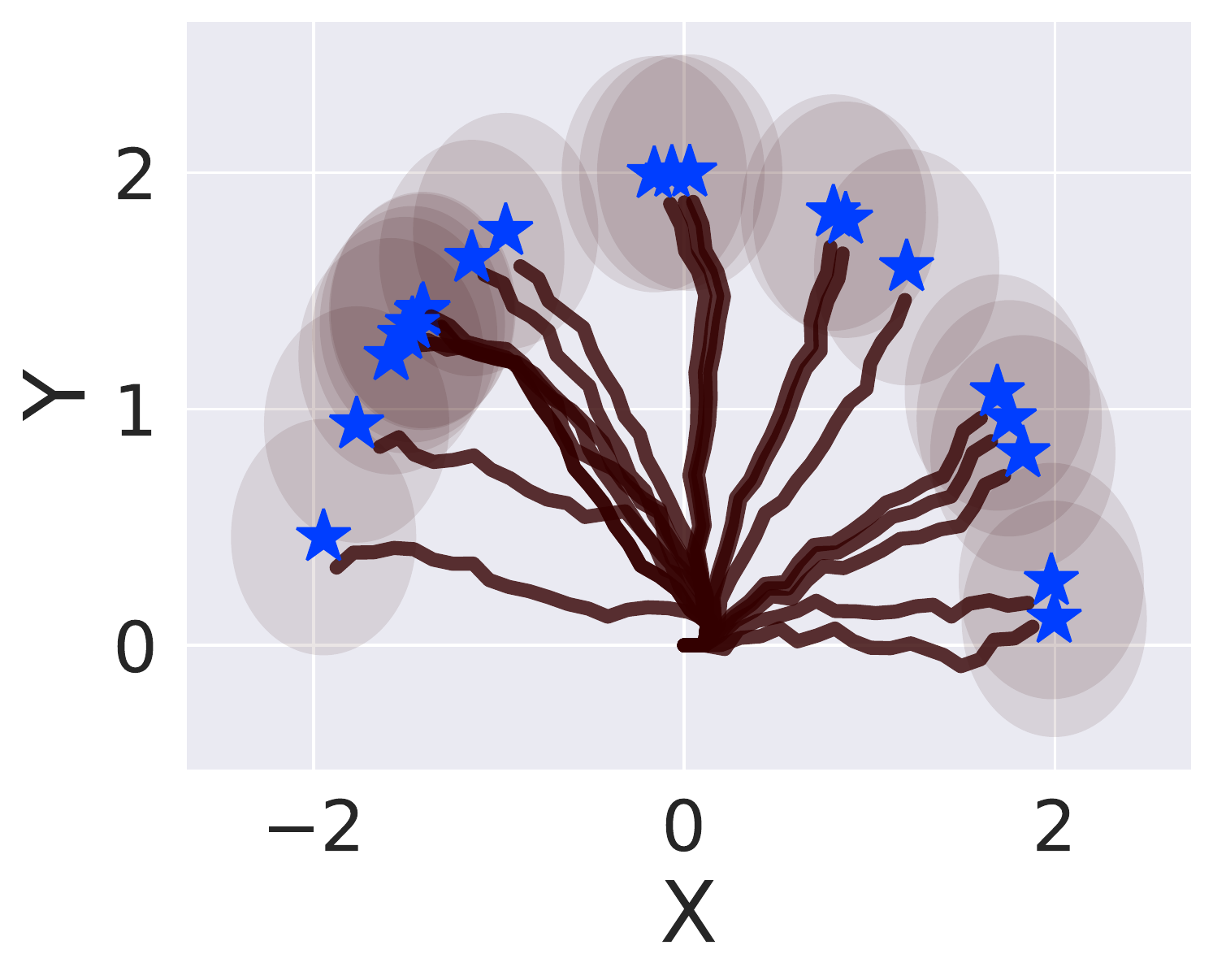}
\end{subfigure}\hfill
\begin{subfigure}{.19\linewidth}
\centering
\includegraphics[width=1\linewidth]{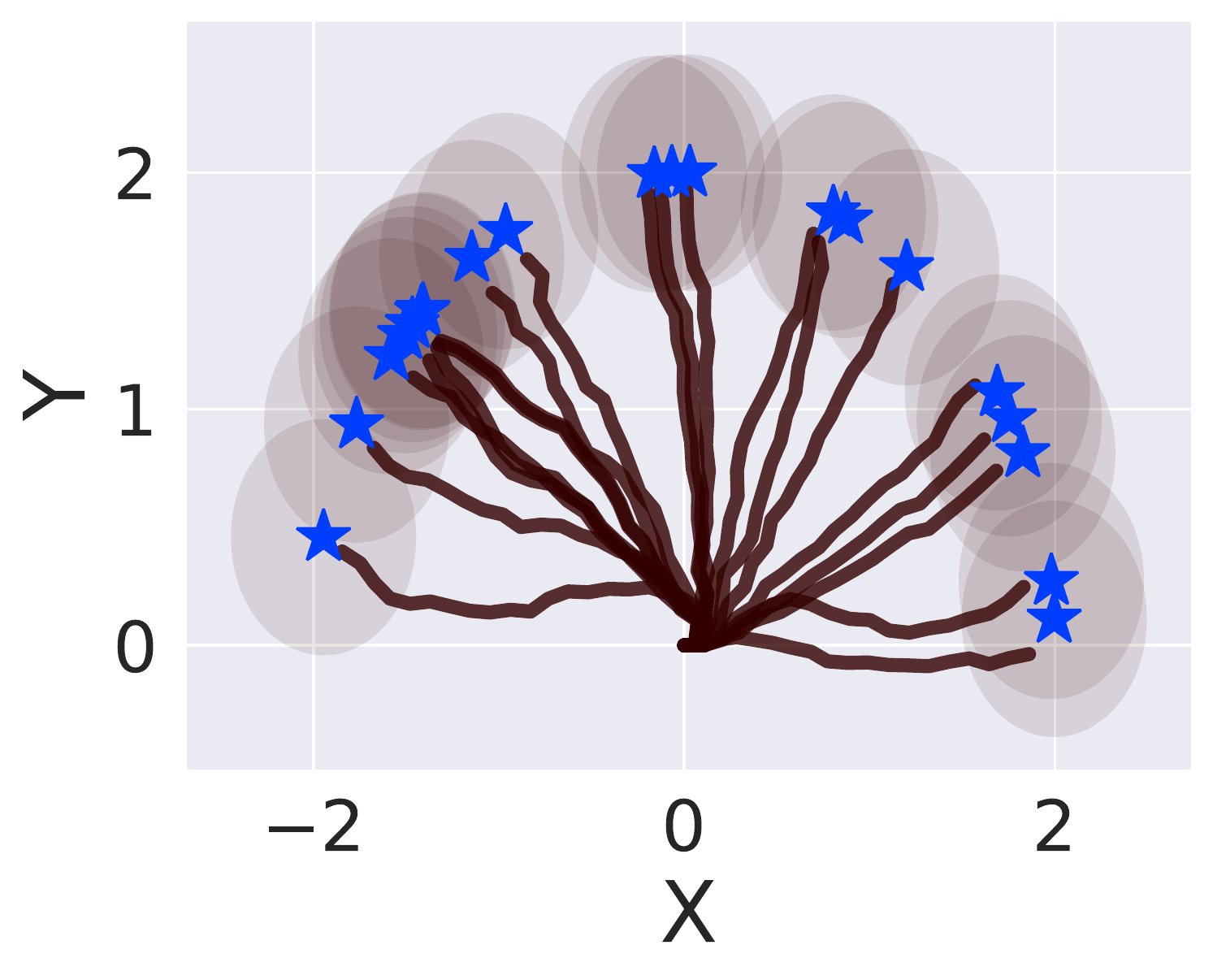}
\end{subfigure}\hfill
\begin{subfigure}{.19\linewidth}
\centering
\includegraphics[width=1\linewidth]{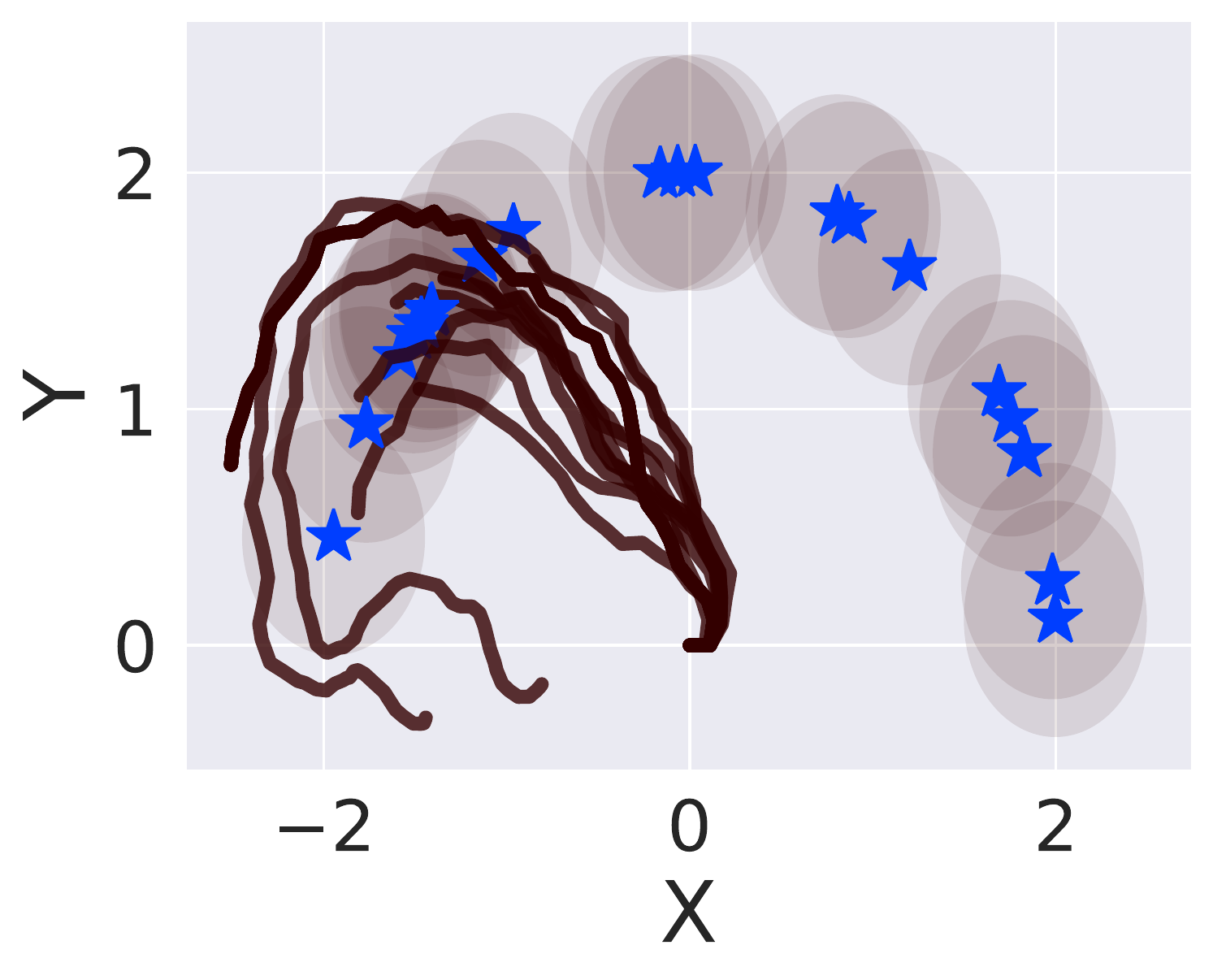}
\end{subfigure}\hfill
\begin{subfigure}{.19\linewidth}
\centering
\includegraphics[width=1\linewidth]{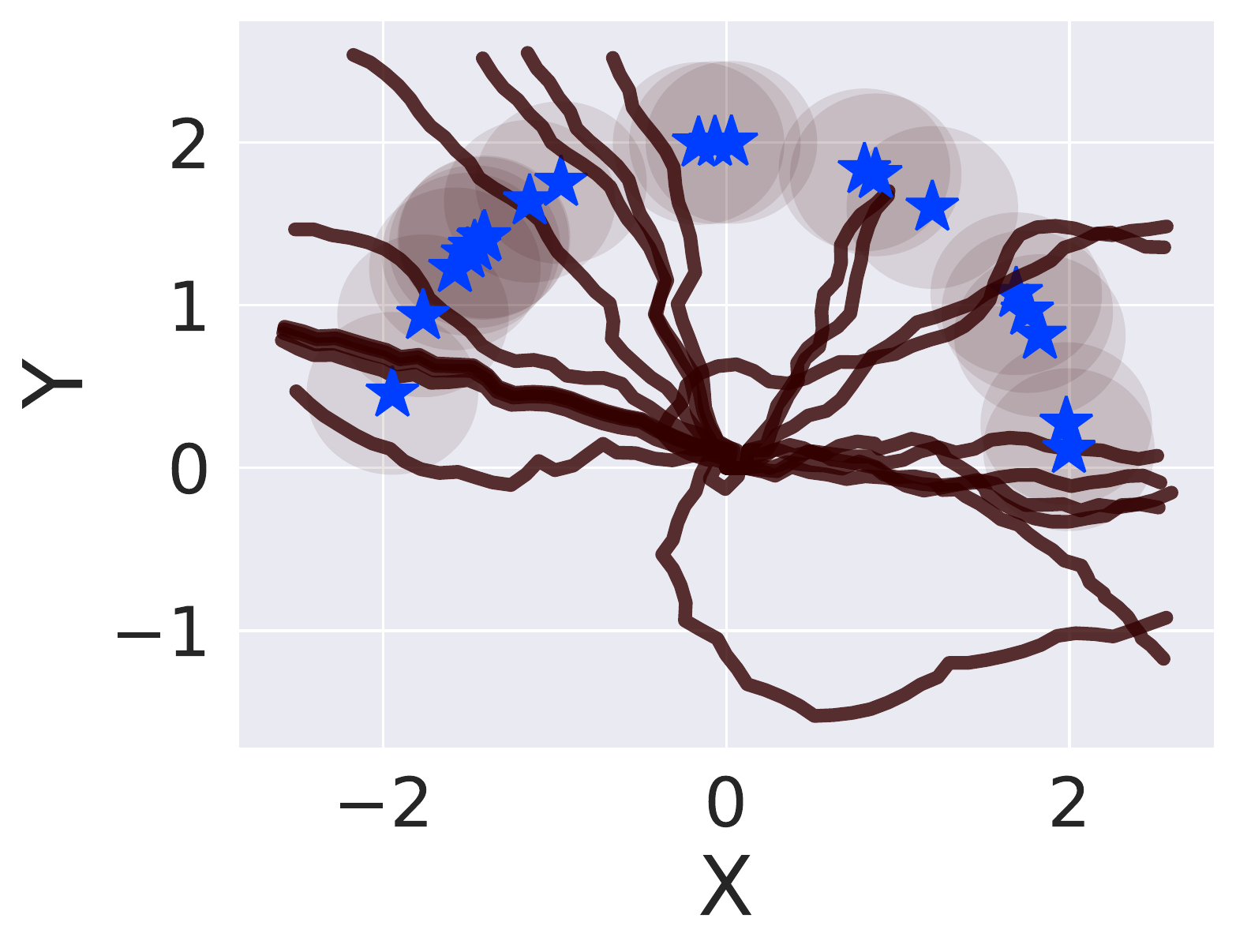}
\end{subfigure}\hfill
\begin{subfigure}{.19\linewidth}
\centering
\includegraphics[width=1\linewidth]{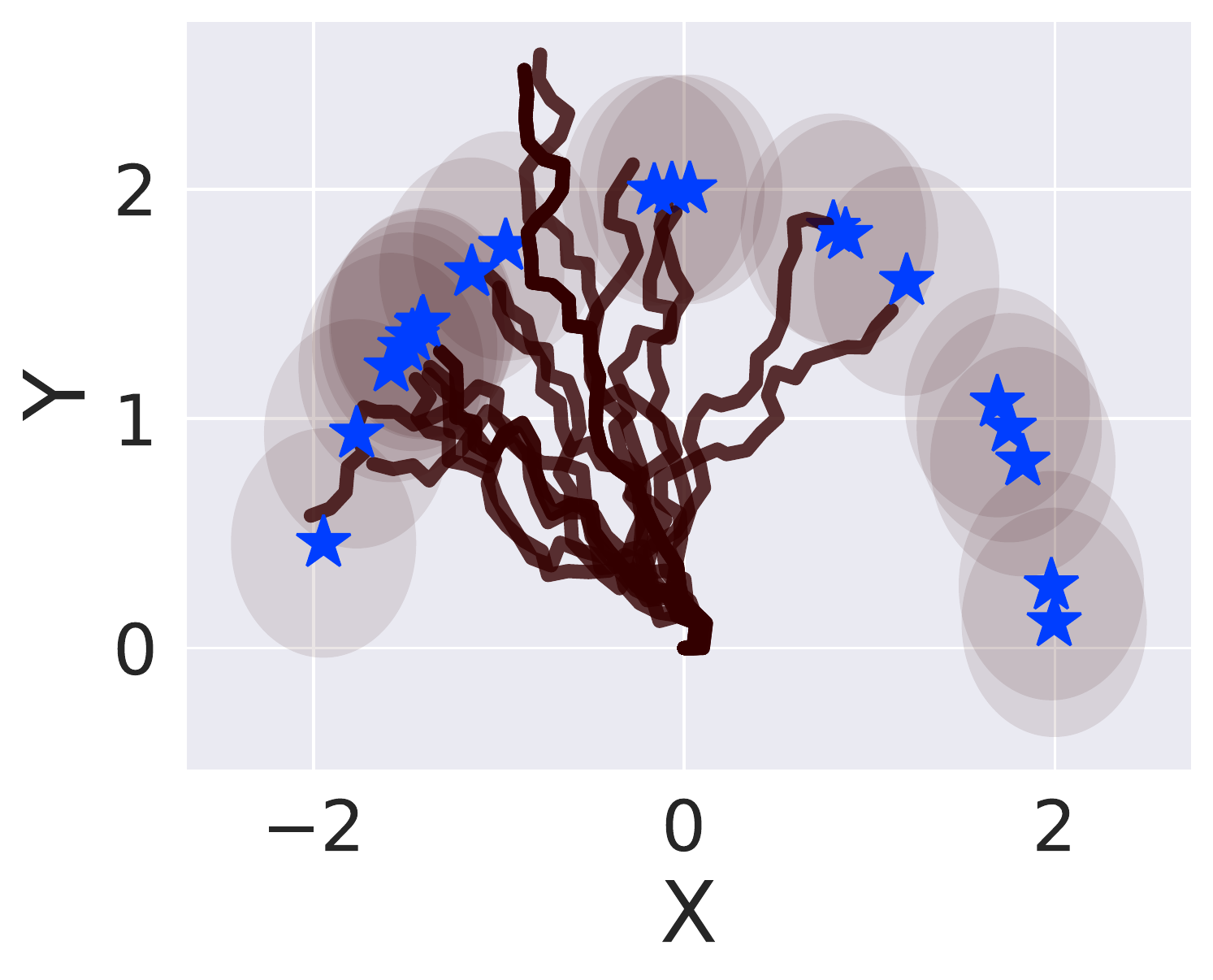}
\end{subfigure}\hfill
\begin{subfigure}{.19\linewidth}
\centering
\includegraphics[width=1\linewidth]{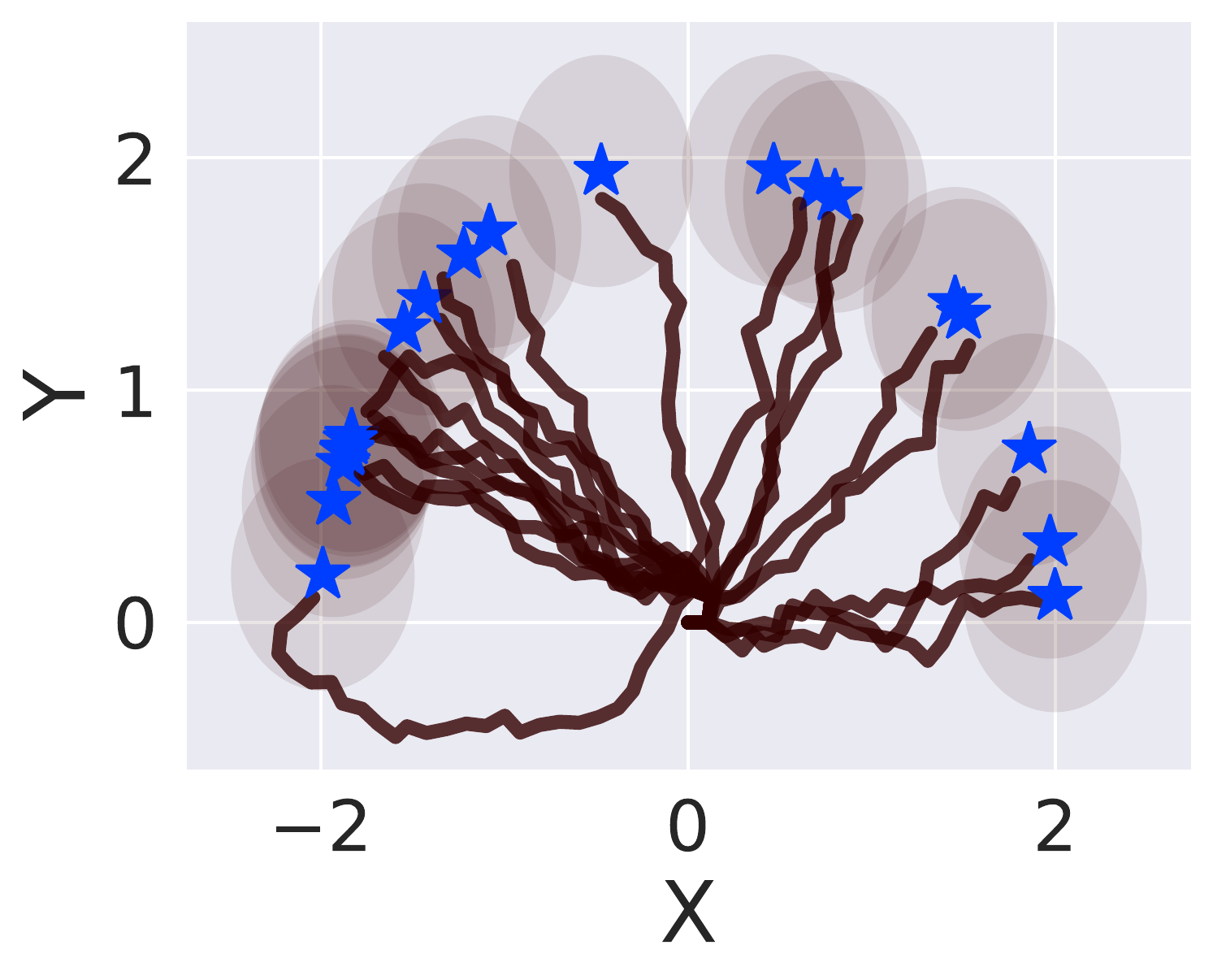}
\caption{EMRLD}
\end{subfigure}\hfill
\begin{subfigure}{.19\linewidth}
\centering
\includegraphics[width=1\linewidth]{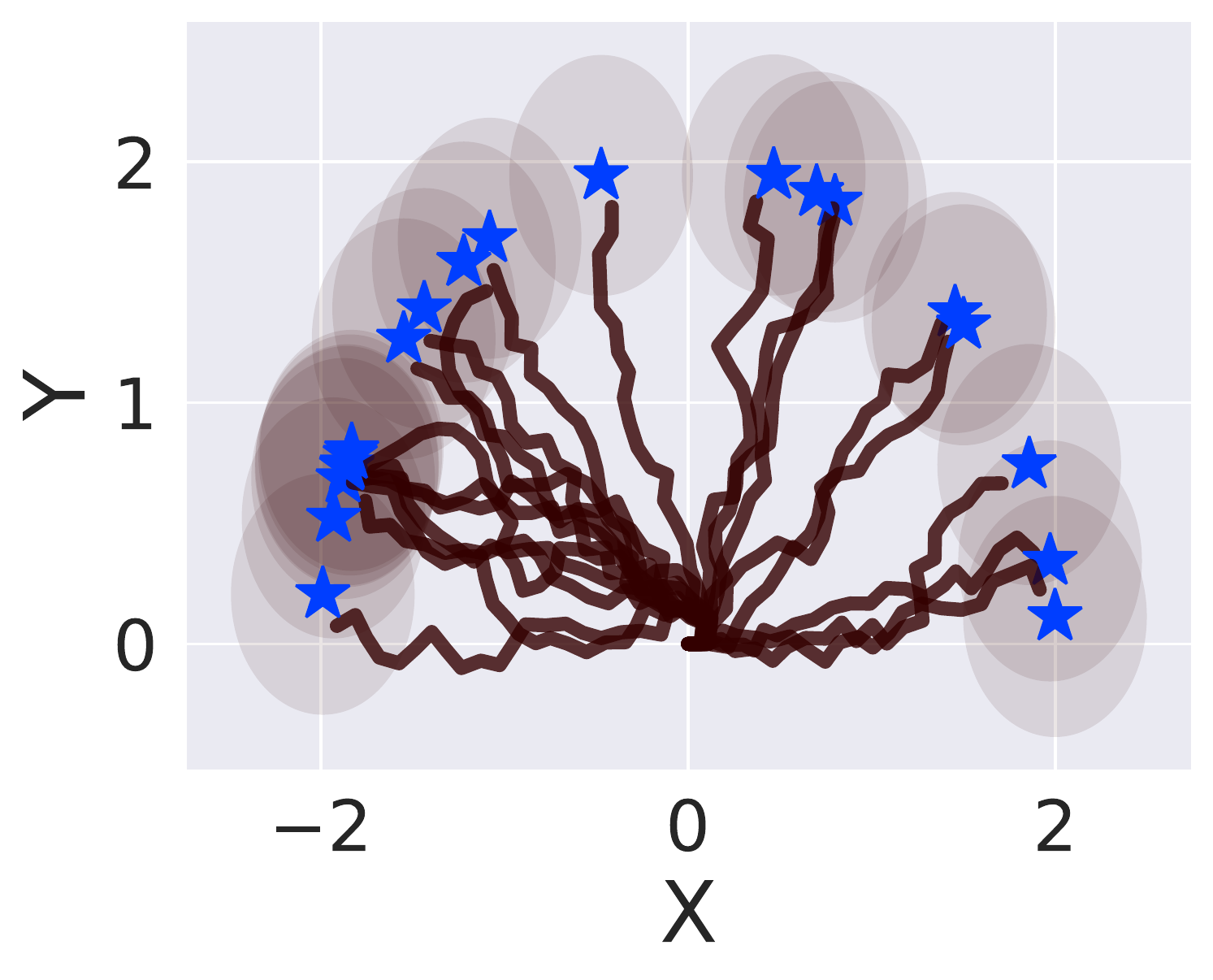}
\caption{EMRLD-WS}
\end{subfigure}\hfill
\begin{subfigure}{.19\linewidth}
\centering
\includegraphics[width=1\linewidth]{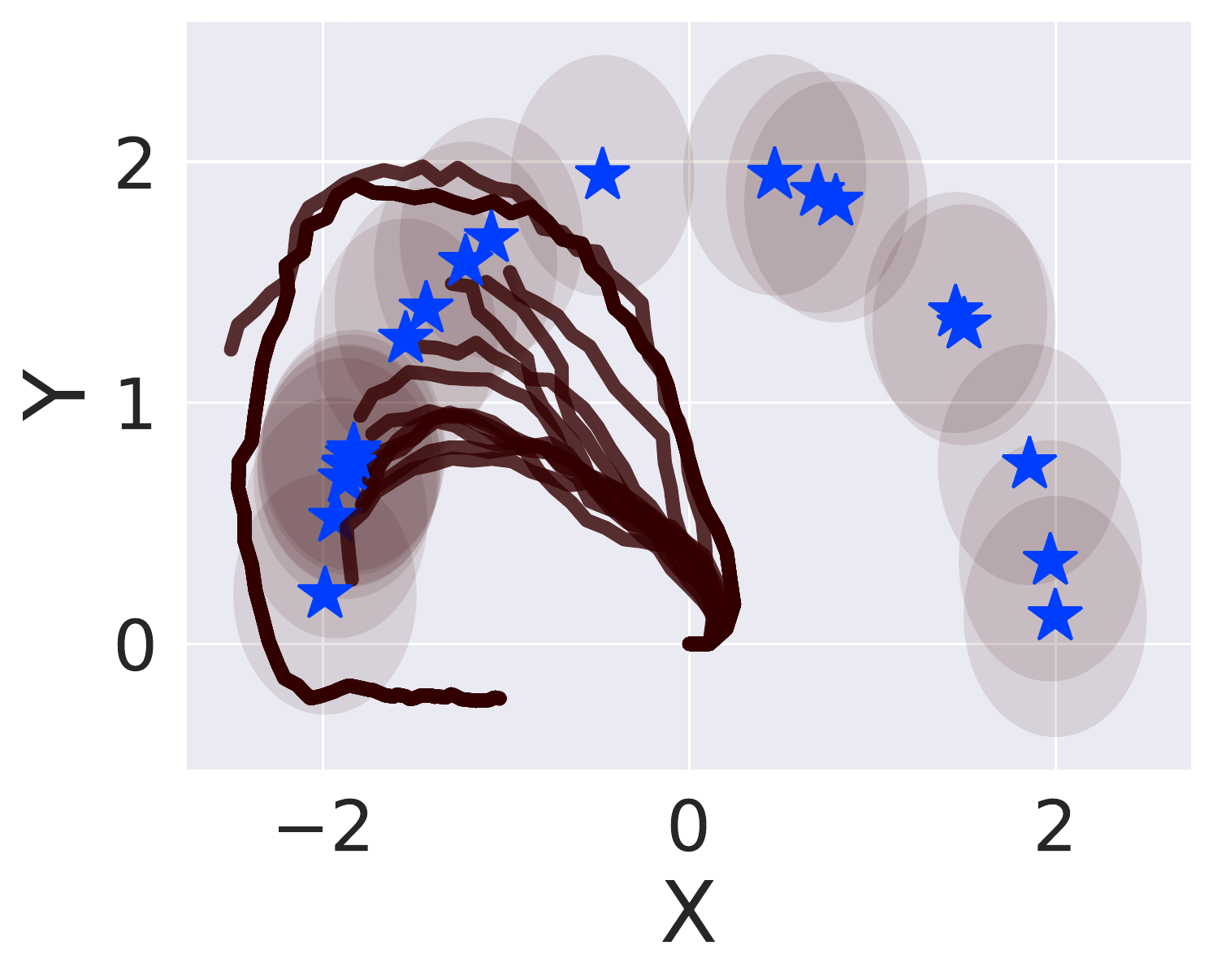}
\caption{MAML}
\end{subfigure}\hfill
\begin{subfigure}{.19\linewidth}
\centering
\includegraphics[width=1\linewidth]{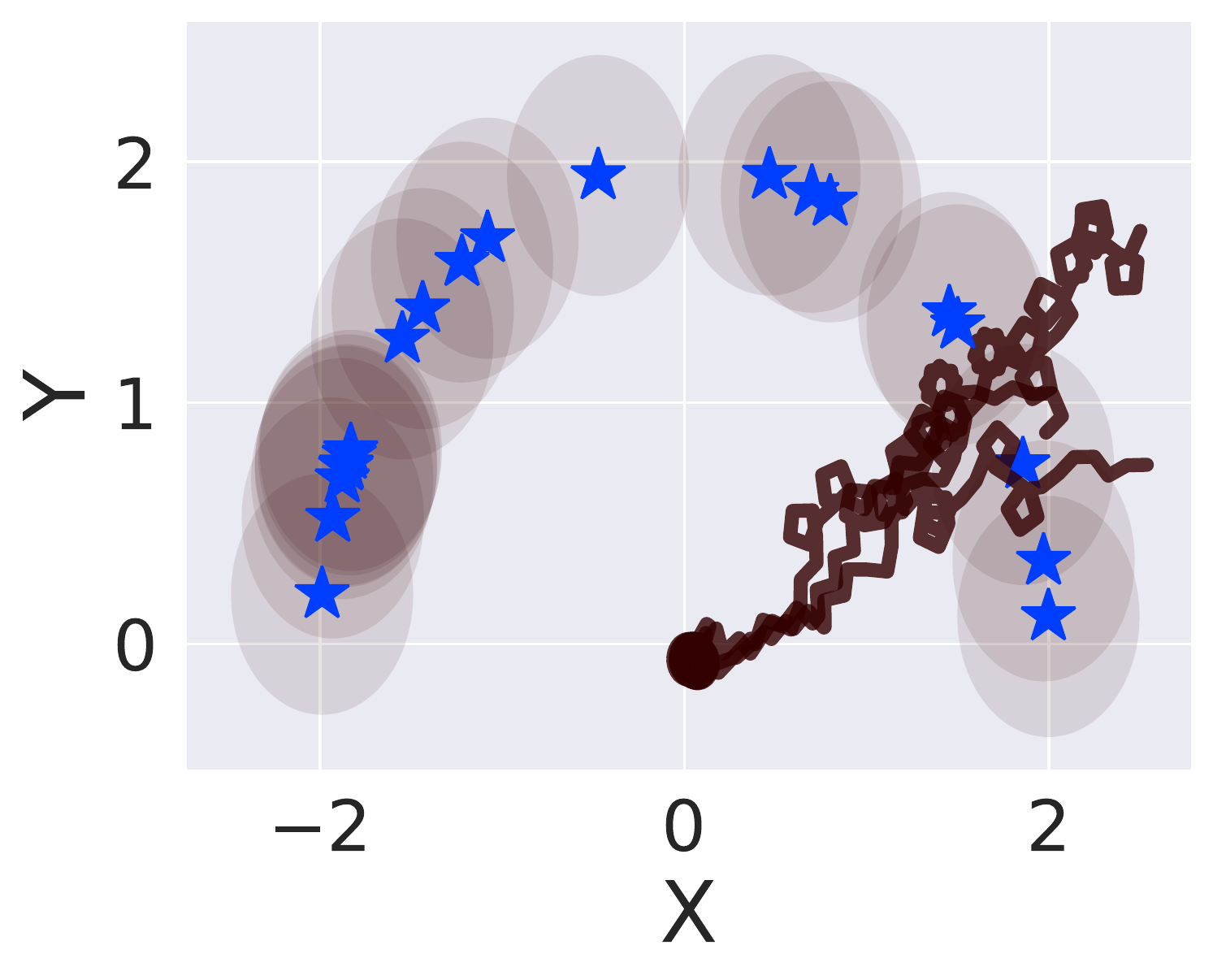}
\caption{Meta-BC}
\end{subfigure}\hfill
\begin{subfigure}{.19\linewidth}
\centering
\includegraphics[width=1\linewidth]{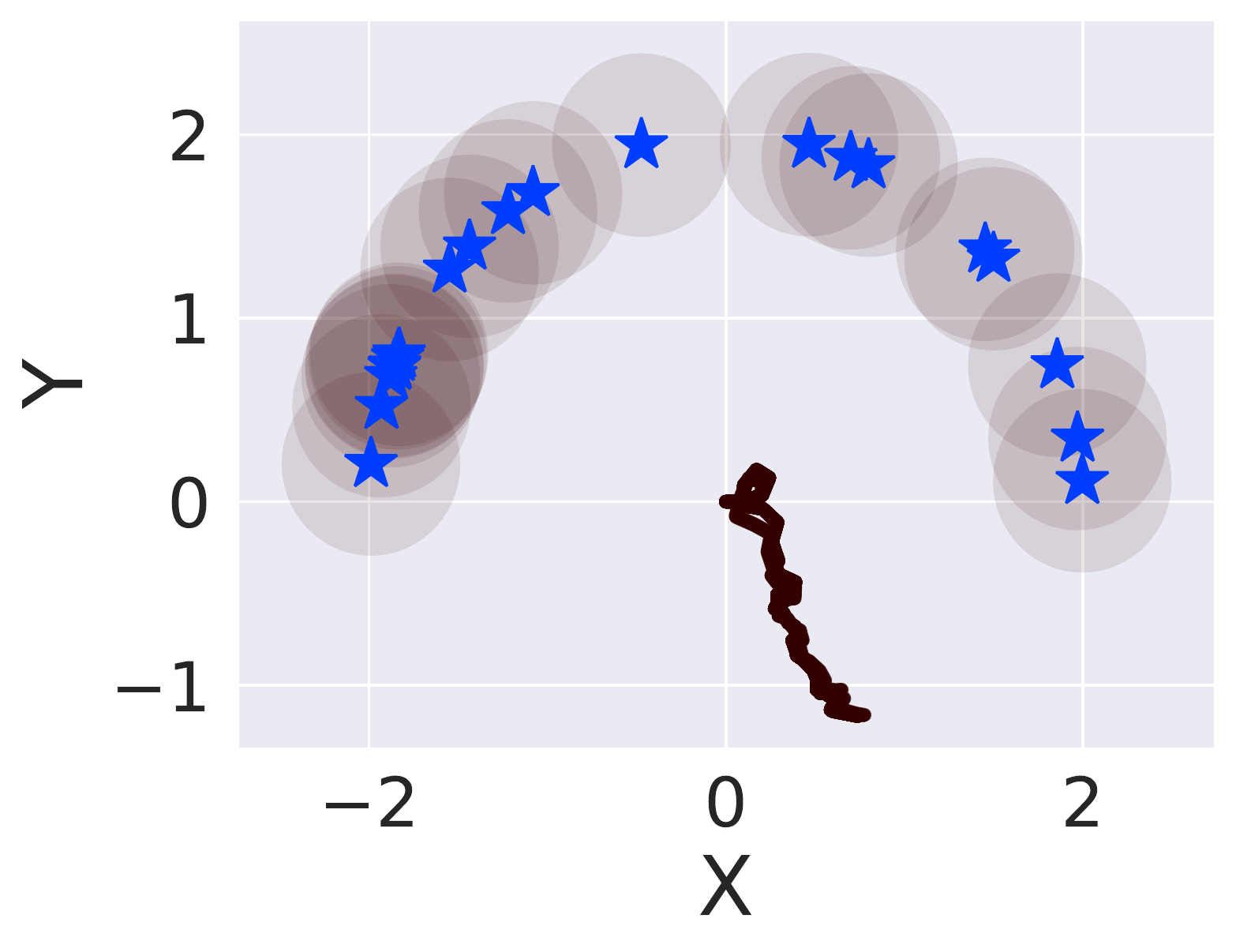}
\caption{GMPS}
\end{subfigure}\hfill
\caption{Trajectories after one step of adaptation using optimal demonstration data (top), and after $3$ steps of adaptation using sub-optimal data (bottom) for $20$ random goal points in \textit{TwoWheeled Locomotion}. The star indicates the goal, and reward is available only in the shaded region.  }
\label{fig:2wheel-trajs}
\vspace{-0.1in}
\end{figure}

\subsection{Real-world Experiments on TurtleBot}
We demonstrate the ability of EMRLD variants to adapt to sparse-reward tasks when they differ in environment dynamics and have sparse reward feedback.  We do so via performance evaluation in the real-world using a TurtleBot shown in Figure~\ref{fig:turtleBot} (left).  We first we modify the \textit{TwoWheeled Locomotion} sparse reward environment by fixing the goal, and changing the dynamics by inducing a residual angular velocity which mimics drifts in the environment.   This environmental drift is what differentiates each task.  In other words, for a given task, the environment would cause the robot to drift in some specific unknown direction.  We train on a set of $9$ tasks with different angular velocity values (i.e., $9$ different driving environments).  We use one trajectory of demonstration data per task collected using an expert policy trained using TRPO.  Note that all the training and data collection is done in simulation. The results are shown in Figure~\ref{fig:turtleBot} (middle), where we see that the EMRLD variants clearly outperform the others.

For testing, we consider the environment where the Turtlebot experiences a fixed but unknown residual angular velocity representing environmental drift.   Thus, we bias the angular velocity control of the TurtleBot by some amount unknown to the algorithm under test.  We first execute the meta policy on the TurtleBot in the real world to collect $5$ trajectories.  We also provide one trajectory of simulated  demonstration data.  We use these samples to adapt the meta-policy, and execute the adapted task-specific policy on the TurtleBot.  The results are shown in Figure~\ref{fig:turtleBot} (right), where the origin is at $(0,0)$ and the goal is indicated by a star.  It is clear that the variants of EMRLD are the best at quickly adapting to the drift in the environment and are successful with just one step of adaptation.

\begin{figure*}[!h]
\centering
\vspace{-0.1in}
\begin{subfigure}{\linewidth}
    \centering
    \includegraphics[width=0.85\linewidth,trim=0 0 0 0,clip]{sections/figures/legend/legend.pdf}
\end{subfigure}
\begin{subfigure}{\linewidth}
    \centering
    \includegraphics[width=0.8\linewidth,height=1.2in]{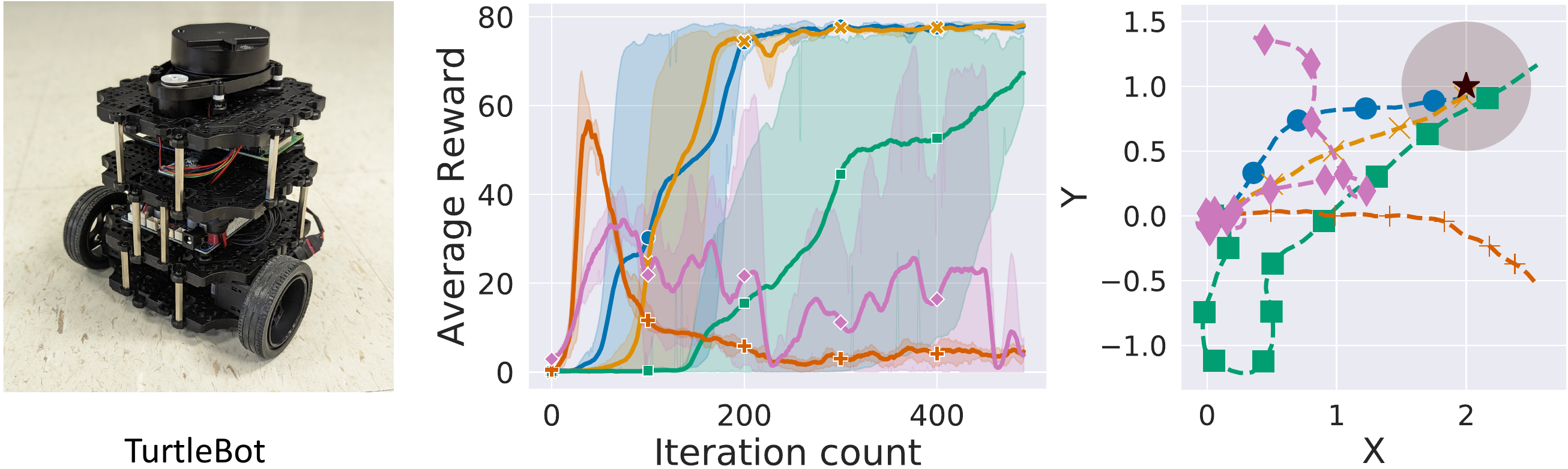}
\end{subfigure}
\caption{The TurtleBot (left) is  a small, two-wheeled robot.  We train the algorithms in simulation (middle) and obtain trained meta-policies.  The meta-polices are adapted under an unknown drift in the real world to control the TurtleBot to attain a given goal point (right).}
\label{fig:turtleBot}
\end{figure*}

\section{Conclusion}
We studied the problem of meta-RL algorithm design for sparse-reward problems, in which demonstration data generated by a possibly inexpert policy is also provided. Our key observation was that simple application of an untrained meta-policy in a sparse-reward environment might not provide meaningful samples, and guidance provided by imitating the inexpert policy can greatly assuage this effect.  We first showed analytically that this insight is accurate and that meta-policy improvement might be feasible as long as the inexpert demonstration policy has an advantage.  We then developed two meta-RL algorithms, EMRLD and EMRLD-WS that are enhanced by using demonstration data.  We show through extensive simulations, as well as real world robot experiments that EMRLD is able to offer a considerable advantage over existing approaches in sparse reward scenarios.

\section{{Limitations and Future Work}}
{EMRLD inherits the limitations of the gradient-based meta-RL approaches like MAML namely on-policy training, and data collection and gradient computation during test adaptation.
A limitation specific to our proposed algorithms is the assumption on availability of task specific demonstration data. However, we reiterate that for a small number of train tasks, this assumption is quite practical, further, our framework allows for this data to be sub-optimal. }  

{
A possible future direction to explore is the context based meta-RL (that does't require gradient computation during testing) with demonstration data. Another future work direction is to explore usage of demonstration data in off-policy meta-RL algorithms.}

\section{Acknowledgement}
This work was supported in part by the National Science Foundation (NSF) grants NSF-CAREER-EPCN-2045783 and NSF ECCS 2038963, and U.S. Army Research Office (ARO) grant W911NF-19-1-0367. Any opinions, findings, and conclusions or recommendations expressed in this material are those of the authors and do not necessarily reflect the views of the sponsoring agencies. 
     

\bibliographystyle{plain}
\bibliography{EMRLD-Ref}
\newpage
\section*{Ethics Statement and Societal Impacts}

Our work considers the theory and instantiation of meta-RL algorithms that were trained and tested on simulation and robot environments.  No human subjects or human generated data were involved.  Thus, we do not perceive ethical concerns with our research approach.

While reinforcement learning shows much promise for application to societally valuable systems, applying it to environments that include human interaction must proceed with caution.  This is because guarantees are probabilistic, and ensuring that the risk is kept within acceptable limits is a must to ensure safe deployments.

\section*{Checklist}


\begin{enumerate}

\item For all authors...
\begin{enumerate}
  \item Do the main claims made in the abstract and introduction accurately reflect the paper's contributions and scope?
    \answerYes{} Section 1
  \item Did you describe the limitations of your work?
    \answerYes{} Section 4
  \item Did you discuss any potential negative societal impacts of your work?
    \answerYes{}
  \item Have you read the ethics review guidelines and ensured that your paper conforms to them?
    \answerYes{}
\end{enumerate}

\item If you are including theoretical results...
\begin{enumerate}
  \item Did you state the full set of assumptions of all theoretical results?
    \answerYes{} Section 3
        \item Did you include complete proofs of all theoretical results?
    \answerYes{} Appendix
\end{enumerate}

\item If you ran experiments...
\begin{enumerate}
  \item Did you include the code, data, and instructions needed to reproduce the main experimental results (either in the supplemental material or as a URL)?
    \answerYes{} Supplementary Material (\texttt{https://github.com/DesikRengarajan/EMRLD})
  \item Did you specify all the training details (e.g., data splits, hyperparameters, how they were chosen)?
    \answerYes{} Appendix 
        \item Did you report error bars (e.g., with respect to the random seed after running experiments multiple times)?
    \answerYes{} Section 4
        \item Did you include the total amount of compute and the type of resources used (e.g., type of GPUs, internal cluster, or cloud provider)?
    \answerYes{} Appendix 
\end{enumerate}

\item If you are using existing assets (e.g., code, data, models) or curating/releasing new assets...
\begin{enumerate}
  \item If your work uses existing assets, did you cite the creators?
    \answerYes{} Section 4
  \item Did you mention the license of the assets?
    \answerYes{}
  \item Did you include any new assets either in the supplemental material or as a URL?
    \answerYes{} Appendix 
  \item Did you discuss whether and how consent was obtained from people whose data you're using/curating?
    \answerNA{}
  \item Did you discuss whether the data you are using/curating contains personally identifiable information or offensive content?
    \answerNA{}
\end{enumerate}

\item If you used crowdsourcing or conducted research with human subjects...
\begin{enumerate}
  \item Did you include the full text of instructions given to participants and screenshots, if applicable?
    \answerNA{}
  \item Did you describe any potential participant risks, with links to Institutional Review Board (IRB) approvals, if applicable?
    \answerNA{}
  \item Did you include the estimated hourly wage paid to participants and the total amount spent on participant compensation?
    \answerNA{}
\end{enumerate}

\end{enumerate}


\newpage
\appendix
\section{Proof of Theorem \ref{thm:improvement}}

We will use the well known  Performance Difference Lemma \cite{kakade2002approximately} in our analysis. 
\begin{lemma}[Performance difference lemma, \cite{kakade2002approximately}] \label{lemma:perf_diff}
For policies any  two policies $\pi_1$ and $\pi_2$, 
\begin{align}
    \loss({\pi_1}) - \loss({\pi_2}) = \frac{1}{1-\gamma} \mathbb{E}_{s \sim d^{\pi_1}, a \sim \pi_1(s,\cdot)} \left[ A^{\pi_2}\left(s, a\right) \right],
\end{align}
where $\loss({\pi_j}) = \mathbb{E}_{s_0 \sim \rho}\left[V^{\pi_j}(s_0)\right] = \frac{1}{(1-\gamma)} \mathbb{E}_{s \sim d^{\pi_j}, a \sim \pi_j(s,\cdot)}\left[R(s,a)\right]$, ~\text{for}~ $j=1,2$. 
\end{lemma}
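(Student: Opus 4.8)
The plan is to prove the identity by computing the expected discounted sum of the advantage function $A^{\pi_2}$ taken along trajectories generated by $\pi_1$, and showing that this single quantity equals both $\loss(\pi_1)-\loss(\pi_2)$ and the claimed visitation-frequency expression. First I would fix the trajectory $\tau=(s_0,a_0,s_1,a_1,\dots)$ with $s_0\sim\rho$, $a_t\sim\pi_1(s_t,\cdot)$, $s_{t+1}\sim P(\cdot\mid s_t,a_t)$, and consider $\mathbb{E}_{\tau\sim(\pi_1,P)}\left[\sum_{t=0}^\infty \gamma^t A^{\pi_2}(s_t,a_t)\right]$. Using the Bellman form of the advantage, $A^{\pi_2}(s,a)=R(s,a)+\gamma\,\mathbb{E}_{s'\sim P(\cdot\mid s,a)}[V^{\pi_2}(s')]-V^{\pi_2}(s)$, I would substitute this into the sum and split it into a reward part and a value-function part.

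The reward part $\mathbb{E}_{\tau\sim(\pi_1,P)}\left[\sum_{t=0}^\infty \gamma^t R(s_t,a_t)\right]$ is exactly $\loss(\pi_1)$ by the definition of $\loss$ given in the Preliminaries. For the remaining part, I would observe that the term $\gamma^{t+1}V^{\pi_2}(s_{t+1})$ coming from the $t$-th summand cancels the term $-\gamma^{t'}V^{\pi_2}(s_{t'})$ appearing at index $t'=t+1$; this telescoping leaves only the single surviving term $-V^{\pi_2}(s_0)$. Taking the expectation over $s_0\sim\rho$ and using $\loss(\pi_2)=\mathbb{E}_{s_0\sim\rho}[V^{\pi_2}(s_0)]$, the value-function part contributes $-\loss(\pi_2)$. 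Hence the expected discounted advantage sum equals $\loss(\pi_1)-\loss(\pi_2)$.

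Finally, I would rewrite the same left-hand side in visitation-frequency form. By linearity of expectation and the definition $d^{\pi_1}(s,a)=(1-\gamma)\sum_{t=0}^\infty \gamma^t\,\mathbb{P}(s_t=s,a_t=a)$ with $s_0\sim\rho$, one has $\mathbb{E}_{\tau\sim(\pi_1,P)}\left[\sum_{t=0}^\infty\gamma^t A^{\pi_2}(s_t,a_t)\right]=\sum_{s,a}\left(\sum_{t=0}^\infty\gamma^t\mathbb{P}(s_t=s,a_t=a)\right)A^{\pi_2}(s,a)=\frac{1}{1-\gamma}\mathbb{E}_{s\sim d^{\pi_1},\,a\sim\pi_1(s,\cdot)}[A^{\pi_2}(s,a)]$, which gives the stated equality. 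The main obstacle is justifying the telescoping in the infinite-horizon limit together with the interchange of summation and expectation; I expect this to be handled cleanly by the standing assumption $\gamma<1$ and boundedness of $R$ (hence of $V^{\pi_2}$), which make every series absolutely convergent and force $\gamma^{t+1}\,\mathbb{E}[V^{\pi_2}(s_{t+1})]\to 0$, so no boundary term survives beyond $-V^{\pi_2}(s_0)$.
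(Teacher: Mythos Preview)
Your proof is correct and is exactly the standard telescoping argument of Kakade and Langford. The paper does not actually prove this lemma; it simply quotes it from \cite{kakade2002approximately} and uses it as a black box in the proof of Theorem~\ref{thm:improvement}, so there is nothing further to compare.
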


\begin{proof}[Proof of Theorem \ref{thm:improvement}]

Recall the following notations:  $\pi_{k}$ is the meta-policy used  at iteration $k$ of our algorithm,   $\pi_{k,i}$ is the policy obtained after task-specific adaptation for task $i$,  $d^{\pi_{k,i}}_{i}$ is the state-visitation frequency of policy $\pi_{k,i}$ for task $i$, and  $J_{i}(\pi_{k,i})$ is the value of the policy for the MDP corresponding to task $i$.  The value of the meta-policy $\pi_{k}$ is  defined as $J_{\textrm{meta}}(\pi_{k}) = \mathbb{E}_{i \sim p(\mathcal{T})} [J_{i}(\pi_{k,i})]$.


We can obtain a performance  difference lemma for the meta-policies as follows. 
\begin{align}
(1 - \gamma) \left( \loss_{\mathrm{meta}}({\pi_{k+1}}) - \loss_{\mathrm{meta}}({\pi_{k}}) \right) &= (1 - \gamma) \left(  \mathbb{E}_{\task \sim p(\mathcal{T})}\left[\loss_{\task}({\pi_{k+1,i}})\right] - \mathbb{E}_{\task \sim p(\mathcal{T})}\left[\loss_{\task}({\pi_{k,i}})\right] \right)\nonumber \\
&= (1 - \gamma) \mathbb{E}_{\task \sim p} \left[\loss_{\task}({\pi_{k+1,\task}}) -  \loss_{\task}({\pi_{k,\task}}) \right] \nonumber\\
&= \mathbb{E}_{\task \sim p(\mathcal{T})} \left[ \mathbb{E}_{s \sim d_i^{\pi_{k+1,\task}}, a \sim \pi_{k+1,\task}(s,\cdot)} \left[ A_i^{\pi_{k,\task}}\left(s, a\right) \right]\right] \nonumber\\
\label{eq:meta-PDL}
&= \mathbb{E}_{\task \sim p(\mathcal{T}), s \sim d_i^{\pi_{k+1,\task}}, a \sim \pi_{1,\task}(s,\cdot)} \left[ A_i^{\pi_{k,\task}}\left(s, a\right) \right],
\end{align} 
where the third equality follows from  Lemma \ref{lemma:perf_diff}. 
Staring from \eqref{eq:meta-PDL}, we get
\begin{align}
     &(1 - \gamma) \left( \loss_{\mathrm{meta}}({\pi_{k+1}}) - \loss_{\mathrm{meta}}({\pi_{k}}) \right) =  \mathbb{E}_{\task \sim p(\mathcal{T}), s \sim d_{i}^{\pi_{k+1,\task}}, a \sim \pi_{k+1, \task}(s,\cdot)} \left[ A_i^{\pi_{k, \task}}\left(s, a\right) \right] \nonumber \\
     &  - \mathbb{E}_{\task \sim p(\mathcal{T}),s \sim d_{i}^{\pi_{k, \task}}, a \sim \pi_{k+1,\task}(s,\cdot)} \left[ A_i^{\pi_{k, \task}}\left(s, a\right) \right] \nonumber  + \mathbb{E}_{\task \sim p(\mathcal{T}),s \sim d_{i}^{\pi_{k, \task}}, a \sim \pi_{k+1,\task}(s,\cdot)} \left[ A_i^{\pi_{k, \task}}\left(s, a\right) \right] \nonumber \\
    &=  \sum_{\task} p(i) \sum_{s} d_{i}^{\pi_{k,\task}}(s) \sum_{a} \pi_{k+1,\task}(s,a) A_i^{\pi_{k,\task}}\left(s, a\right) \nonumber \\
    \label{eq:inter}
     &\hspace{1cm} + \sum_{\task} p(i) \sum_{s} \left(d_{i}^{\pi_{k+1,\task}}(s) - d_{i}^{\pi_{k,\task}}(s)\right) \sum_{a} \pi_{k+1,\task}(s,a)  A_i^{\pi_{k,\task}}\left(s, a\right) 
 \end{align}
 
We with now consider the second term of equation \ref{eq:inter}: 
\begin{align}
    \sum_{\task} &p(i) \sum_{s}   \left(d_{i}^{\pi_{k+1,\task}}(s) - d_{i}^{\pi_{k,\task}}(s)\right) \sum_{a} \pi_{k+1,\task}(s,a)  A_i^{\pi_{k,\task}}\left(s, a\right) \nonumber \\
    &= \sum_{\task} p(\task) \sum_{s} d_{i}^{\pi_{k+1,\task}}(s)\sum_{a} \pi^{\textrm{dem}}_{i}\left(s,a\right) A_\task^{\pi_{2,\task}}\left(s, a\right) \nonumber \\
    & \quad + \sum_{\task} p(\task) \sum_{s} d_{i}^{\pi_{k+1,\task}}(s)\sum_{a} \left(\pi_{k+1,\task}(s,a) - \pi^{\textrm{dem}}_{i}\left(s,a\right) \right) A_\task^{\pi_{k,\task}}\left(s, a\right) \nonumber \\
    &\quad -  \sum_{\task} p(\task) \sum_{s}  d_{i}^{\pi_{k,\task}}(s)  \sum_{a} \left( \pi_{k+1,\task}(s,a) - \pi_{k,\task}(s,a) \right)  A_\task^{\pi_{k,\task}}\left(s, a\right) \nonumber \\
    & \quad - \sum_{\task} p(\task) \sum_{s}  d_{i}^{\pi_{k,\task}}(s)  \sum_{a} \pi_{k,\task}(s,a) A_\task^{\pi_{k,\task}}\left(s, a\right) \nonumber \\
     &\stackrel{(a)}{\geq} \Delta - \sum_{\task} p(\task) \sum_{s} d_{i}^{\pi_{k+1,\task}}(s)\sum_{a} \left|\pi_{k+1,\task}(s,a) - \pi^{\textrm{dem}}_{i}\left(s,a\right) \right| \left| A_i^{\pi_{k,\task}}\left(s, a\right)\right| \nonumber \\
        &\quad -  \sum_{\task} p(\task)  \sum_{s}  d_{i}^{\pi_{k,\task}}(s)  \sum_{a} \left| \pi_{k+1,\task}(s,a)  - \pi_{k,\task}(s,a) \right|  \left| A_i^{\pi_{k,\task}}\left(s, a\right) \right| \nonumber \\
        \label{eq:thm1-pf-step3}
        &\stackrel{(b)}{=}  {\Delta} - {2C_1}\mathbb{E}_{i \sim p(\mathcal{T})}\left[ D_{TV}^{\pi_{k+1,\task}} \left(\pi_{k+1,\task}, \pi^{\textrm{dem}}_{i} \right)\right] - {2C_1} \mathbb{E}_{i \sim p(\mathcal{T})}\left[ D_{TV}^{\pi_{k,\task}} \left(\pi_{k+1,\task}, \pi_{k,\task} \right)\right]
\end{align} 
Here, we get $(a)$ is from Assumption \ref{asmp:behavior_data} from which we have  $\sum_{a}  \pi^{\textrm{dem}}_{i}\left(s,a\right)  A_i^{\pi_{k,\task}}\left(s, a\right) \geq \Delta, ~  \forall s,i$,  and noting that $ \sum_{a}   \pi_{k,\task}(s,a)   A^{\pi_{k,\task}}\left(s, a\right) = 0$ by definition of advantage function.  We get $(b)$ by denoting $C_1 = {\max_{i}} \max_{s,a}\left| A_i^{\pi_{k,\task}}\left(s, a\right) \right|$. Using \eqref{eq:thm1-pf-step3} in   \ref{eq:inter}, we get
\begin{align*}
        &J_{\textrm{meta}}(\pi_{k+1}) - J_{\textrm{meta}}({\pi_k}) \geq  \left( \frac{1}{1-\gamma} \mathbb{E}_{\substack{i \sim p(\mathcal{T}),  (s,a) \sim d^{\pi_{k,\task}}_{i}}} \left[ \frac{\pi_{k+1,\task}(s,a)}{\pi_{k,\task}(s,a)} A_i^{\pi_{k,\task}}\left(s, a\right) \right] \right. \\
        &\left. - \frac{2C_1}{1-\gamma}\mathbb{E}_{i\sim p(\mathcal{T})}\left[ D_{TV}^{\pi_{k,\task}} \left(\pi_{k+1,\task}, \pi_{k,\task} \right)\right]  \right)+\left( \frac{\Delta}{1-\gamma} - \frac{2C_1}{1-\gamma}\mathbb{E}_{i \sim p(\mathcal{T})}\left[ D_{TV}^{\pi_{k+1,\task}} \left(\pi_{k+1,\task}, \pi^{\textrm{dem}}_{i} \right)\right] \right),
\end{align*}
which completes the proof. 
\end{proof}

\section{Environments}
In this section, we describe all the simulation and real-world environments in detail.

\subsection{Simulation Environments}

\textbf{Point 2D Navigation:}
Point 2D Navigation \cite{finn2017model} is a 2 dimensional goal reaching environment with $\mathcal{S} \subset \mathbb{R}^2$,   $\mathcal{A} \subset \mathbb{R}^2$, and the following dynamics, 
\begin{align*}
    x_{t+1} = x_t + dx_t, \quad  y_{t+1} = x_t +dy_t, \quad  \text{such that}~~ dx^2_t + dy^2_t \leq 0.1^2
\end{align*}
Where $x_t$ and $y_t$ are the $x$ and $y$ location of the agent, $dx_t$ and $dy_t$ are the actions taken which correspond to the displacement in the $x$ and $y$ direction respectively, all taken at time step $t$. The goals are located on a semi circle of radius $2$, and the episode terminates when the agent reaches the goal or spends more than $100$ time steps in the environment. The sparse reward function for the agent is defined as follows, 
\begin{align*}
   R_t &= 
   \begin{cases}
    1 - \sqrt{(x_{t+1} - x_g)^2 + (y_{t+1} - y_g)^2} \quad &\text{if } \sqrt{(x_{t+1} - x_g)^2 + (y_{t+1} - y_g)^2} \leq 0.2 \\
   100 - t-1 \quad &\text{if } \sqrt{(x_{t+1} - x_g)^2 + (y_{t+1} - y_g)^2} \leq 0.02,\\
   0 \quad &\text{otherwise},  
   \end{cases}
\end{align*}
where $x_g$ and $y_g$ are the $x,y$ location of the goal. The agent is given a zero reward everywhere except when it is a certain distance $D_1=0.2$ near the goal location. Within the distance $D_1$, the agent is given two kinds of rewards. If the agent is very close to the goal, say a distance $D_2=0.02$, then it rewarded with a positive bonus of $1\times$\verb+Number_of_times_steps_remaining_in_episode+. This is done to create a sink near goal location to trap the agent inside it, rather than letting it wander in the $D_1$ region to keep collecting misleading positive reward. For distances between 0.02 and 0.2,  the agent is given a positive reward of \verb+1-dist(agent,goal)+. 

\textbf{TwoWheeled Locomotion:} \label{apdx:TwoWheeledEnv}
The TwoWheeled Locomotion environment \cite{gupta2018meta} is designed based on the two wheeled differential drive model with $\mathcal{S} \subset \mathbb{R}^2$, $\mathcal{A} \subset \mathbb{R}^2$, and the following dynamics, 
\begin{align*}
    x_{t+1} = x_t + v_t \cos(\theta_t)  dT, ~~ y_{t+1} = y_t + v_t \sin(\theta_t)  dT,~~ \theta_{t+1} = \theta_t + \omega_t  dT,
\end{align*}
with $v_t \in [0,0.22], \omega_t \in [-2.84,2.84]$, where  $x_t,y_t$ correspond to the $x$ and $y$ coordinate of the agent, $v_t$ and $\omega_t$ are the actions corresponding to the linear and angular velocity of the agent all at time $t$, and $dT=0.5$ is the time discretization factor. Goals are located on a semi-circle of radius $2$, and the episode terminates if the agent reaches the goal, or spends more than $100$ time steps in the environment, or moves out of region, which is a square box of side $2.5$. The sparse reward function for the agent is defined as follows, 
\begin{align*}
   R_t &= 
   \begin{cases}
    1 - \sqrt{(x_{t+1} - x_g)^2 + (y_{t+1} - y_g)^2} \quad &\text{if } \sqrt{(x_{t+1} - x_g)^2 + (y_{t+1} - y_g)^2} \leq 0.5 \\
   100 - t-1 \quad &\text{if } |x_{t+1} - x_g| \leq 0.2 \quad \text{and} \quad  |y_{t+1} - y_g| \leq 0.2,\\
   0 \quad &\text{otherwise},  
   \end{cases}
\end{align*}
where $x_g$ and $y_g$ are the $x,y$ location of the goal. 

\textbf{Half Cheetah Forward-Backward:}
The Half Cheetah Forward-Backward environment \cite{finn2017model}, is a modified version of the standard MuJoCo\cite{todorov2012mujoco}  HalfCheetah environment with $\mathcal{S} \subset \mathbb{R}^{20}$ and   $\mathcal{A} \subset \mathbb{R}^6$, where the agent is tasked with moving forward or backward, with the episode terminating if the agent spends more than $100$ time steps in the environment.  The sparse reward function is as follows, \begin{align*}
   R_t &= 
   \begin{cases}
    d_g \cdot \dfrac{(x_{t+1} -x_t )}{dT} - c_t \quad &\text{if } |x_{t+1} - x_0| > 2.  \\
   0 \quad &\text{otherwise},  
   \end{cases}
\end{align*}
where $x_t$ corresponds to the $x$ position of the agent, $c_t$ is the control cost, all at time step $t$, $dT$ is the time discretization factor, and $d_g$ is the goal direction, which is $+1$ for the forward task and $-1$ for the backward task.

\textbf{TwoWheeled Locomotion - Changing Dynamics:} \label{apdx:ChangingDyna}
We modify the TwoWheeled locomotion environment  by fixing the goal to $(2,1)$, and adding a residual angular velocity,
\begin{align*}
    x_{t+1} &= x_t + v_t \cos(\theta_t)  dT, \quad y_{t+1} = y_t + v_t \sin(\theta_t)  dT, \quad  \theta_{t+1} = \theta_t + \omega_g  + \omega_t  dT 
\end{align*}
with $ v_t \in [0,0.15], \omega_t \in [-1.5,1.5]$, where  $\omega_g$ is the residual angular velocity, which corresponds to different task, and mimics drift in the environment. The sparse reward function is similar to the one described in section \ref{apdx:TwoWheeledEnv}.
\begin{align*}
   R_t &= 
   \begin{cases}
    1 - \sqrt{(x_{t+1} - 2)^2 + (y_{t+1} - 1)^2} \quad &\text{if } \sqrt{(x_{t+1} - x_g)^2 + (y_{t+1} - y_g)^2} \leq 0.5 \\
   100 - t-1 \quad &\text{if } |x_{t+1} - 2| \leq 0.1 \quad and \quad  |y_{t+1} - 1| \leq 0.1,\\
   0 \quad &\text{otherwise},  
   \end{cases}
\end{align*}

\subsection{Real-World TurtleBot Platform and Experiments}
We deploy the policy trained on the environment described in section \ref{apdx:ChangingDyna} on a TurtleBot 3 \cite{amsters2019turtlebot}, a real world open source differential drive robot. We use ROS as a middleware to set up communication between the bot and a custom built OpenAI Gym environment. 
The OpenAI Gym environment acts as an interface between the policy being deployed and the bot. The custom built environment, subscribes to ROS topics (\texttt{/odom} for $x_t, y_t$, $\theta_t$), which are used to communicate the state of the bot, and publish (\texttt{/cmd\_vel} for $v_t$, $\omega_t$) actions.
This is done asynchronously through a callback driven mechanism. 
The bot transmits its state information over a wireless network to an Intel NUC, which transmits back the corresponding action according to the policy being deployed.


 \begin{wrapfigure}{r}{0.0\textwidth}
    \centering
    \includegraphics[width=0.6\linewidth]{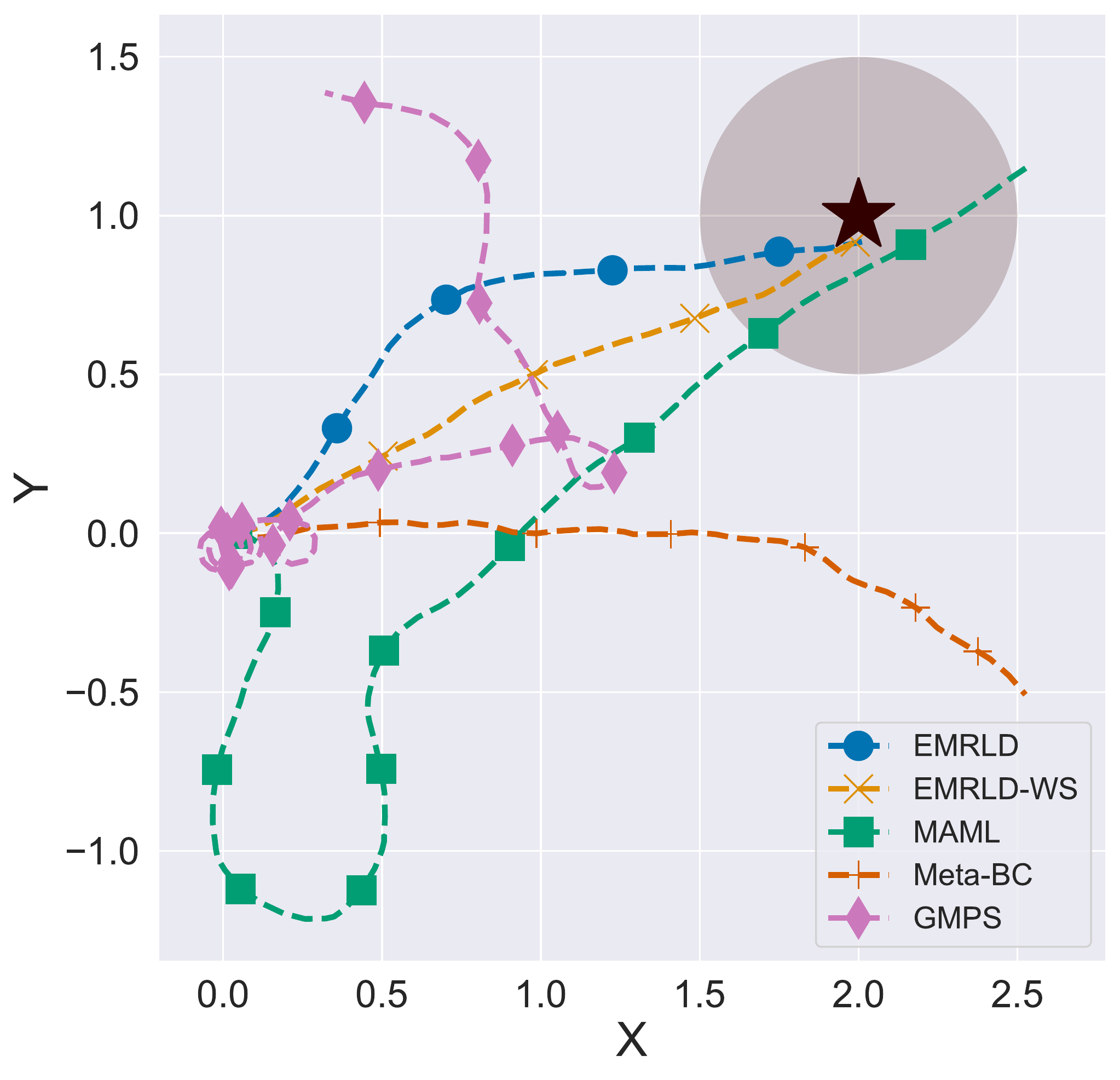}
    \caption{Trajectories in the real world for all algorithms with residual angular velocity   $\omega_g = -0.65$}
    \label{fig:turtleBot_enlarged}
\end{wrapfigure}

The trajectories executed by the adapted policies are plotted in figure \ref{fig:turtleBot_enlarged} ( note that figure \ref{fig:turtleBot_enlarged} is  the same as figure \ref{fig:turtleBot}, re-plotted here for clarity). During policy execution on the TurtleBot, we set the residual angular velocity that mimics drift  to $\omega_g=-0.65$, we note that our algorithms (EMRLD and EMRLD-WS) are able to adapt to the drift in the environment and reach the goal. We further note that MAML, takes a longer sub-optimal route to reach the reward region, but misses the goal.


We have provided a link to real-world demonstration with our code\footnote{https://github.com/DesikRengarajan/EMRLD}. For EMRLD, we show the execution of the meta policy used to collect data, and the adapted policy. It can be clearly seen that the the meta policy collects rewards in the vicinity of the goal region, which is then used for adaptation. The adapted policy then reaches the goal. We further show the execution of the adapted policies for the baseline algorithms  on the TurtleBot, and we can observe that EMRLD and EMRLD-WS outperform all the baseline algorithms and reach the goal.  

\section{Experimental Setup}

\textbf{Computing infrastructure and run time:} The experiments are run on computers  with AMD Ryzen Threadripper 3960X 24-Core Processor with max CPU speed of 3800MHz. Our implementation does not make use of GPUs. Insead, the implementation is CPU thread intensive. On an average, EMRLD and EMRLD-WS take $\sim$3h to run on smaller environments, and take $\sim$5h on HalfCheetah. We train goal conditioned expert policies using TRPO. Expert policy training takes $\sim$0.5h to run.  Our code is based on \textit{learn2learn}\footnote{\texttt{https://github.com/learnables/learn2learn}} \cite{Arnold2020-ss}, a software library built using PyTorch \cite{paszke2019pytorch} for Meta-RL research.

\textbf{Neural Network  and Hyperparameters:} In our work, the meta policy $\pi^{\text{dem}}$ and the adapted policies $\pi_{k,i}$ are stochastic Gaussian policies parameterized by neural networks. The input for each policy network is the state vector $s$ and the output is a Gaussian mean vector $\mu$. The standard deviation $\sigma$ is kept fixed, and is not learnable. During training, an action is sampled from $\mathcal{N}(\mu, \sigma)$. 

For value baseline (used for advantage computation) of meta-learning algorithms, we use a linear baseline function of the form $B(s) = \zeta_{s,t}^\top G(s)$, where $\zeta_{s, t}=\operatorname{concat}\left(s, s \odot s, 0.01 t,(0.01 t)^{2},(0.01 t)^{3}, 1\right),$ and $G(s)$ is discounted sum of rewards starting from state $s$ till the end of an episode. This was first proposed in \cite{duan2016benchmarking} and is used in MAML \cite{finn2017model}. This is preferred as a learnable baseline can add additional gradient computation and backpropagation overheads in meta-learning.   

We use TRPO on goal conditioned policies to obtain optimal and sub-optimal experts for all the tasks in an environment at once. For each environment, the task context variable, \textit{i.e.}, a vector that contains differentiating information on a task, is appended to the input state vector of the policy network. The rest of the policy mechanism is same as described above for meta-policies. A learnable value network is used to cut variance in advantage estimation. Once the expert policy is trained to the desired amount, just one trajectory per task is sampled to construct demonstration data.  

All the models used in this work are multi-layer perceptrons (MLPs). The policy models for all the meta-learning algorithms have two layers of 100 neurons each with Rectified Linear Unit (ReLU) non-linearities. The data generating policy and value models use two layers of 128 neurons each.

Table \ref{tb:hyper_param} lists the set of hyperparameters used for EMRLD, EMRLD-WS and the baseline algorithms.  In addition to the ones listed in Table \ref{tb:hyper_param}, meta batch size is dependant on the training environment: it is $12$ for Point2D Navigation, $24$ for TwoWheeled Locomotion and $10$ for HalfCheetah Forward-Backward. In Table \ref{tb:hyper_param}, Meta LR specified as `TRPO' means that the learning rate is determined by step-size rule coming from TRPO. The meta optimization steps in Meta-BC and GMPS use ADAM \cite{kingma2014adam} optimizer with a learning rate of $0.01$. We use $20$ CPU cores to parallelize policy rollouts for adaptation. The hyperparameters $\mathrm{w_{rl}}$ and $\mathrm{w_{bc}}$ are kept fixed across environments for EMRLD and EMRLD-WS. The parameter $\mathrm{w_{bc}}$ is kept at $1$ for both optimal and sub-optimal data, and across environments. The parameter $\mathrm{w_{rl}}$ takes a lower value of $0.2$ across environments for optimal data as in practise optimal data is expected to be highly informative.  Hence, we desire the gradient component arising from optimal data to hold more value while adaptation. For sub-optimal data, the agent is required to explore to obtain performance beyond data, and hence, $\mathrm{w_{rl}}$ is kept at $1$. We further show in section \ref{appx:sens} that our algorithm is robust to choice of $\mathrm{w_{bc}}$ and $\mathrm{w_{rl}}$.  

\begin{table}[h]
\begin{center}
\begin{small}
\begin{sc}
\begin{tabular}{lccccccr}
\toprule
Hyperparameter  & EMRLD & EMRLD-WS & MAML & Meta-BC & GMPS \\
\midrule
Adaptation LR  & 0.01 & 0.01 & 0.01 & 0.01 & 0.01\\
Meta LR & TRPO & TRPO & TRPO & 0.01(ADAM) & 0.01(ADAM)\\
Adapt Steps & 1 & 1 & 1 & 1 & 1\\
Adapt batch size & 20 & 20 & 20 & 20 & 20\\
GAE $\tau$ & 1 & 1 & 1 & 1 & 1\\
$\gamma$ & 0.95 & 0.95 & 0.95 & 0.95 & 0.95\\
CPU thread No. & 20 & 20 & 20 & 20 & 20\\
$\mathrm{w_{rl}}$ & 0.2/1 & 0.2/1 & N/A & N/A & N/A\\
$\mathrm{w_{bc}}$ & 1 & 1 & N/A & N/A & N/A\\
\bottomrule
\end{tabular}
\end{sc}
\end{small}
\end{center}
\caption{Hyperparameter values for EMRLD, EMRLD-WS, MAML, Meta-BC and GMPS. The hyperparameters are kept fixed across algorithms, across environments, and across demonstration data type.}
\label{tb:hyper_param}
\end{table}

\section{Sensitivity Analysis} \label{appx:sens}
\begin{figure*}[!h]
\centering
\begin{subfigure}{\linewidth}
    \centering
    \includegraphics[width=0.85\linewidth,trim=0 0 0 0,clip]{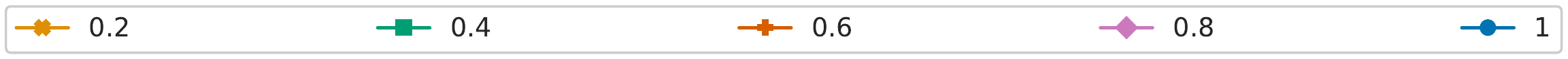}
\end{subfigure}
\vspace{-0.05in}
\begin{subfigure}{\linewidth}
    \centering
    \includegraphics[width=\linewidth]{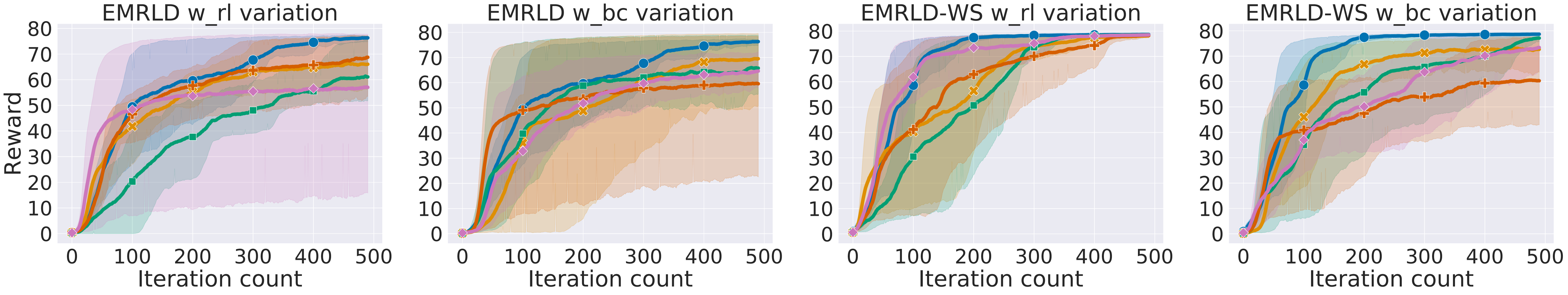} 
\end{subfigure}
\caption{Sensitivity analysis for EMRLD and EMRLD-WS when the demonstration data is optimal. (a) Keeping $\mathrm{w_{bc}}=1$, $\mathrm{w_{rl}}$ is varied from $0.2$ to $1$ in steps of $0.2$ for EMRLD. (b) Keeping $\mathrm{w_{rl}}=1$, $\mathrm{w_{bc}}$ is varied from $0.2$ to $1$ in steps of $0.2$ for EMRLD. (c) Keeping $\mathrm{w_{bc}}=1$, $\mathrm{w_{rl}}$ is varied from $0.2$ to $1$ in steps of $0.2$ for EMRLD-WS. (d) Keeping $\mathrm{w_{rl}}=1$, $\mathrm{w_{bc}}$ is varied from $0.2$ to $1$ in steps of $0.2$ for EMRLD-WS.}
\label{fig:sensitivity_analysis}
\end{figure*}

We perform sensitivity analysis for parameters $\mathrm{w_{rl}}$ and $\mathrm{w_{bc}}$ on our algorithms EMRLD and EMRLD-WS for optimal data on Point2D Navigation. The results for the same are included in Fig. \ref{fig:sensitivity_analysis}. All the plots are averaged over three random seed runs. To assess the sensitivity of our algorithms to $\mathrm{w_{rl}}$, we fix $\mathrm{w_{bc}}=1$ and vary $\mathrm{w_{rl}}$ to take values from $0.2, 0.4, 0.6, 0.8~\text{and}~1$. Similarly, to assess how sensitive our algorithm's performance to $\mathrm{w_{bc}}$ is, we fix $\mathrm{w_{rl}}=1$ and vary $\mathrm{w_{bc}}$ to take values from $0.2, 0.4, 0.6, 0.8~\text{and}~1$. All the hyperparameters are kept fixed to the values listed in Table \ref{tb:hyper_param}. We observe that our algorithms are fairly robust to variations in $\mathrm{w_{rl}}$ and $\mathrm{w_{bc}}$ for three random seeds. Since demonstration data is leveraged to extract useful information regarding the environment and the reward structure, our algorithms are slightly more sensitive to $\mathrm{w_{bc}}$ variation than $\mathrm{w_{rl}}$ variation.

\section{{Ablation experiments}}
\begin{figure*}[!h]
\centering
\begin{subfigure}{\linewidth}
    \centering
    \includegraphics[width=0.6\linewidth,trim=0 0 0 0,clip]{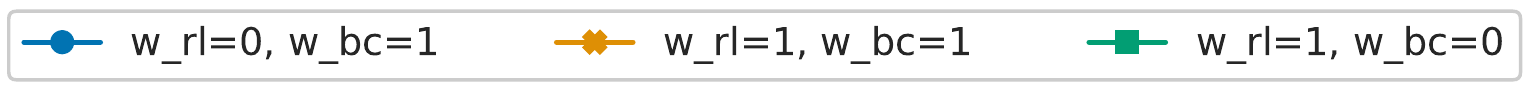}
\end{subfigure}
\vspace{-0.05in}
\begin{subfigure}{\linewidth}
    \centering
    \includegraphics[width=\linewidth]{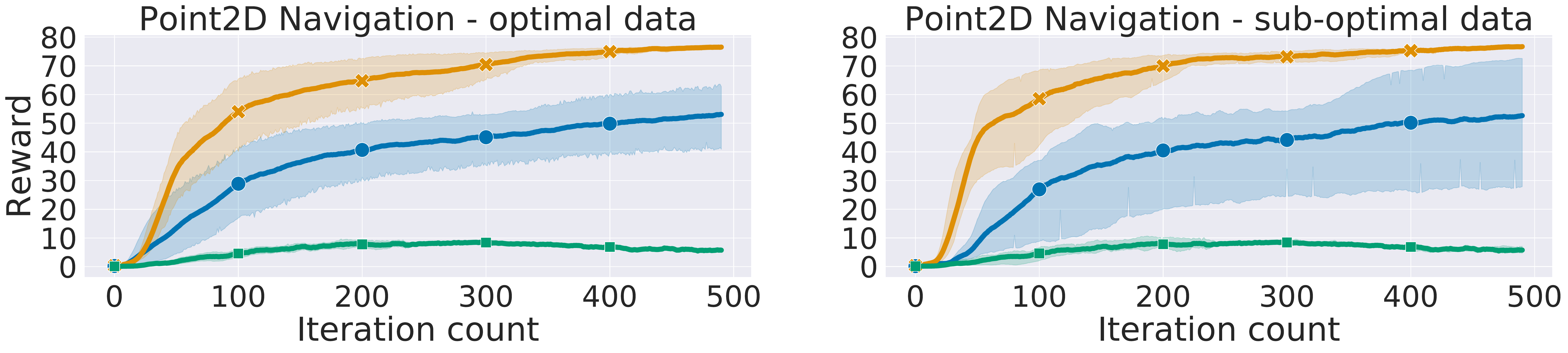} 
\end{subfigure}
\caption{{Ablation experiments on EMRLD for Point2D Navigation environment by changing $w_{bc}$ and $w_{rl}$ in the adaptation step with the optimal and the sub-optimal demonstration data.}}
\label{fig:ablation}
\end{figure*}

{We perform ablation experiments for EMRLD by setting $\mathrm{w_{bc}} = 0$ and $\mathrm{w_{rl}} = 0$ on the Point2D Navigation environment with the optimal and the sub-optimal demonstration data. We observe from figure \ref{fig:ablation}, that setting $\mathrm{w_{bc}} = 0$  hampers the performance to a greater extant as the agent is unable to extract useful information from the environment due to the sparse reward structure. We also observe that setting $\mathrm{w_{rl }}=0$ hampers the performance, as the agent is unable to exploit the RL structure of the problem to achieve high rewards.}

\section{Related Work}
\paragraph{Meta-Learning:} 
Reinforcement learning (RL) has become popular as a tool to perform \textit{learning from interaction} in complex problem domains like autonomous navigation of stratospheric balloons \cite{bellemare2020autonomous} and autonomously solving a game of Go \cite{silver2016mastering}. In large scale complex environments, one requires a large amount of data to learn any meaningful RL policy \cite{botvinick2019reinforcement}. This is in stark contrast to how we as humans behave and learn - by translating our \textit{prior knowledge} of past exposure to same/similar tasks into behavioural policies for a new task at hand. The initial work \cite{schmidhuber1996simple} took to addressing the above mentioned gap and proposed the paradigm of meta-learning. The idea has been extended to obtain gradient based algorithms in supervised learning, unsupervised learning, control, and reinforcement learning \cite{schweighofer2003meta, hochreiter2001learning, thrun2012learning, wang2016learning, duan2016rl}. 
More recently, model-agnostic meta-learning (MAML)~\cite{finn2017model} introduced a gradient based two-step approach to meta-learning: an inner adaptation step to learn specific task policies, and an outer meta-optimization loop that implicitly makes use of the inner policies. MAML can be used both in the supervised learning and RL contexts. Reptile~\cite{nichol2018first} introduced efficient first order meta-learning algorithms. PEARL~\cite{rakelly2019efficient} takes a different approach to meta-RL, wherein task specific contexts are learned during training, and interpreted from trajectories during testing to solve the task. In its native form, the RL variant of MAML can suffer from issues of inefficient gradient estimation, exploration, and dependence on a rich reward function. Among others, algorithms like ProMP \cite{rothfuss2018promp} and DiCE \cite{foerster2018dice} address the issue of inefficient gradient estimation. Similarly, E-MAML \cite{al2017continuous, stadie2018some} and MAESN \cite{gupta2018meta} deal with the issue of exploration in meta-RL. Inadequate reward information or sparse rewards is a particularly challenging problem setting for RL 
, and hence, for meta-RL. Very recently, HTR~\cite{packer2021hindsight} proposed to relabel the experience replay data of any off-policy algorithm to overcome exploration difficulties in sparse reward goal reaching environments. Different from this approach, we leverage the popular \textit{learning from demonstration} idea to aid learning of meta-policies on tasks including and beyond goal reaching ones.

\textbf{RL with demonstration:} `Learning from demonstrations' (LfD)  \cite{schaal1996learning} first proposed the use of demonstrations in RL to speed up learning. Since then, leveraging demonstrations has become an attractive approach to aid learning~\cite{hester2018deep, vecerik2017leveraging, nair2018overcoming}. Earlier work has incorporated data from both expert and inexpert policies to assist with policy learning in sparse reward environments \cite{nair2018overcoming, hester2017learning,vecerik2017leveraging, kang2018policy, rengarajan2022reinforcement}.  In particular,  DQfD \cite{hester2017learning} utilizes demonstration data by adding it to the replay buffer for Q-learning. DDPGfD\cite{vecerik2017leveraging} extend use of demonstration data to continuous action spaces, and is built upon DDPG \cite{lillicrap2015continuous}. DAPG \cite{rajeswaran2017learning} proposes an online fine-tuning algorithm by combining policy gradient and behavior cloning. POfD \cite{kang2018policy} propose an approach  to use demonstration data through an appropriate loss function into the RL policy optimization step to implicitly reshape sparse reward function. LOGO \cite{rengarajan2022reinforcement} proposes a two-step guidance approach where demonstration data is used to guide the RL policy in the initial phase of learning. 

\textbf{Meta-RL with demonstration:} Use of demonstration data in meta-RL is new, and the works in this area are rather few. Meta Imitation Learning~\cite{finn2017one} extends MAML~\cite{finn2017model} to imitation learning from expert video demonstrations.  WTL~\cite{zhou2019watch} uses demonstrations to generate an exploration algorithm, and uses the exploration data along with demonstration data to solve the task. ODA \cite{zhao2021offline} use demonstration data to perform offline meta-RL for industrial insertion, and \cite{arulkumaran2022all} propose generalized `upside down RL' algorithms that use demonstration data to perform offline-meta-RL.  GMPS~\cite{mendonca2019guided} extends MAML~\cite{finn2017model} to leverage expert demonstration data by performing meta-policy optimization via supervised learning.  Closest to our approach are GMPS~\cite{mendonca2019guided} and Meta Imitation Learning~\cite{finn2017one}, and we will focus on comparisons with versions of these algorithms, along with the original MAML~\cite{finn2017model}.

\end{document}